\begin{document}

\title{Gradient Scarcity with Bilevel Optimization\\ for Graph Learning}

\author{Hashem Ghanem, Samuel Vaiter, and Nicolas Keriven
\thanks{The authors acknowledge the support of ANR Grava ANR-18-CE40-0005 and ANR GRandMa ANR-21-CE23-0006.}%
\thanks{HG is with CNRS, IMB, Univ. de Bourgogne; NK is with CNRS, IRISA, Univ. Rennes 1; SV is with CNRS, LJAD, Univ. Cote d’Azur.}}

\markboth{}%
{Gradient scarcity with Bilevel Optimization for Graph Learning}

\maketitle

\begin{abstract}
  A common issue in graph learning under the semi-supervised setting  is referred to as \emph{gradient scarcity}.
  That is, learning graphs by minimizing a loss on a subset of nodes causes edges between unlabelled nodes that are far from labelled ones to receive zero gradients.
  The phenomenon was first described when optimizing the graph and the weights of a Graph Neural Network (\GCN) with a joint optimization algorithm.
  In this work, we give a precise mathematical characterization of this phenomenon, and prove that it also emerges in \emph{bilevel} optimization, where additional dependency exists between the parameters of the problem.
  While for \GCNs gradient scarcity occurs due to their finite receptive field, we show that it also occurs with the Laplacian regularization model, in the sense that gradients amplitude decreases exponentially with distance to labelled nodes.
  To alleviate this issue, we study several solutions: we propose to resort to latent graph learning using a Graph-to-Graph model (\GtoG), graph regularization to impose a prior structure on the graph, or optimizing  on a larger graph than the original one with a reduced diameter.
   Our experiments on synthetic and real datasets validate our analysis and prove the efficiency of the proposed solutions.
\end{abstract}

\begin{IEEEkeywords}
gradient scarcity, graph learning, bilevel optimization, automatic differentiation.
\end{IEEEkeywords}

\section{Introduction}

\IEEEPARstart{T}{he}  expensive cost of labelling data represents a challenge as the amount of generated data has been growing exponentially.
As a result, it is common to observe both labelled and unlabelled data points, the latter being usually the vast majority.
Learning tasks on datasets which comprise both labeled and unlabeled points is referred to as Semi-Supervised Learning (SSL).
SSL is usually handled with extra assumptions on the data. The main one, called \emph{homophily}, refers to the fact that ``nearby'' points are likely to have similar labels \cite{wang2006label}.
Moreover, points in many applications represent entities that are naturally linked to each other, \eg in biology \cite{liu2018constrained} or social media \cite{liben2003link}. There again, linked entities are likely to share the same label, which underlines the importance of exploiting the links when solving SSL inference problems.
Consequently, various graph-based methods have been developed for SSL.

One issue with such methods is that their performance is highly dependent on the graph quality.
This issue poses a significant challenge as real-world graphs are inherently noisy, significantly degrading the performance and leading to sub-optimal solutions.
Many \emph{graph learning} algorithms have thus been proposed in the literature to overcome this issue. Among these methods, a mainstream approach is to optimize the graph structure by means of optimizing the performance directly in the downstream task.

This approach involves generating a graph that, when used by a graph-based model, minimizes some loss on labelled nodes. However, graph-based models themselves require an optimization process to minimize the classification loss.
Therefore, both the graph learning process and the graph-based classification model need to learn by minimizing the ``same'' loss. There are three common gradient-based optimization routines that can be applied for this purpose. In the first routine, both the graph and the graph-based model are \emph{jointly} optimized. In the second, \emph{alternate minimization}, one is fixed while the other is updated in one iteration, and vice versa in the next iteration. The third routine is \emph{bilevel optimization}, that is an (outer) graph learning optimization problem involving the optimal model obtained by an (inner) optimization.

For Graph Neural Network (\GCN), the authors in~\cite{fatemi2021slaps} show that joint optimization leads to \emph{gradient scarcity}.
It refers to the fact that connections between unlabelled nodes ``far'' from the labelled ones receive \emph{zero gradients}, \ie they receive no supervision during the optimization and are not learned. This is due to the finite receptive field (depth) of message-passing \GCN.
In this work, we focus on bilevel optimization and prove that gradient scarcity also occurs for all \GCNs, despite additional dependency between the parameters. We also prove that this issue emerges with other graph-based classifiers, including Laplacian-based labels propagation, which, unlike \GCNs, has infinite receptive field.

\subsection{Semi-Supervised Learning}

A graph $\calG$ is a pair $(V, E)$, where $V$ is a set of $n$ nodes and $E\subseteq V\times V$ is a set of edges.
We represent a graph by its adjacency matrix $\V{A} \in \bbR ^{n\times n}$, where $\V{A}_{i,j}$ is the weight of the edge between nodes $i,j$. We denote by $\V{X} \in \bbR^{n\times p}$ the feature matrix whose rows include the features of corresponding nodes, and by $\V{Y}\in\bbR^{n}$ the vector of node labels.

We look at transductive SSL problems, where we have a set of points, a subset of which is labelled, and the goal is to approximate the labelling function on unlabelled points.
Formally, we have $(\V{X}_{obs},~\calG_{obs},~\V{Y}_{obs})$, where $\calG_{obs}$ is the observed graph, $\V{X}_{obs}$ are the observed node features (we will drop the subscript and write $\V{X}$ in the rest of the paper) and $\V{Y}_{obs}\in\bbR^{n}$ contains the labels of a subset of points at coordinates $i \in V_{tr}\subset V$ and, \eg not-a-number ``\emph{NaN}'' outside of $V_{tr}$.
There are roughly two main strategies to solve SSL problems. The first is to \emph{propagate} known labels using a \emph{regularization} process. Predicted labels reads the following:
\begin{equation}
    \V{Y}_{\Reg}(\V{A})
    \!\in\! \argmin_{\V{Y}}
     \tfrac{1}{|V_{tr}|}\!\sum_{i\in V_{tr}}\!\ell({\V{Y}}_i, (\V{Y}_{obs})_i)\!+\!
    \lambda R(\V{Y}\!,\!\V{A}),
    \label{eq:inner-problem-laplace}
\end{equation}
where 
$\ell$ is a smooth loss function commonly chosen to be the Categorical Cross Entropy (CCE) loss for classification, and the Mean Square Error (MSE) for regression, $R$ is a regularization function, and $\lambda$ is a balancing parameter. A popular choice is the \emph{Laplacian regularization}:
\begin{equation}
    \label{eq:lapl_reg}
    R(\V{Y}, \V{A}) = \frac{1}{|E|}\sum_{i,j}\V{A}_{ij}(\V{Y}_i - \V{Y}_j)^2 = \frac{1}{|E|}\V{Y}^\top \V{L} \V{Y}\enspace,
\end{equation}
where $\V{L} = \V{L}(\V{A}) = \V{D} - \V{A}$ is the Laplacian of the graph. Note that here the node features $\V{X}$ are not used.

The second main strategy for SSL is to train a \emph{parametric model} ${\V{Y}}_W(\V{X}, \V{A})$ parameterized by the weights $W$, such as \GCNs. The objective reads:
\begin{align}
&\V{Y}_{\GCN}(\V{A}) = {\V{Y}}_{W^\star}(\V{X}, \V{A}), \text{ where} \notag \\
& W^\star = \argmin_W \frac{1}{|V_{tr}|}\sum_{i\in V_{tr}}  \ell\Big(\big({\V{Y}}_W(\V{X}, \V{A})\big)_i, (\V{Y}_{obs})_i\Big)\enspace. \label{eq:inner-problem-gcn}
\end{align}
In this paper, we use message-passing \GCNs with sum aggregation. The first layer is $\V{X}^{[0]} = \V{X}$, propagated as
\begin{equation}\label{eq:gcn_layer}
    \V{X}^{[l]} = \phi(\V{X}^{[l-1]} \V{W}_1^{[l]} + \V{A}\V{X}^{[l-1]} \V{W}_2^{[l]} + \V{1}_n (\V{b}^{[l]})^\top) \enspace,
\end{equation}
where $\V{W}_1^{[l]}, \V{W}_2^{[l]} \in \mathbb{R}^{d_{l-1} \times d_{l}}$ are learnable weights, $\V{b}^{[l]} \in \mathbb{R}^{d_{l}}$ is a learnable bias, $d_l$ is the output dimensionality of the $l$-th layer, $\V{1}_n = [1, \ldots, 1]^\top\in \bbR^n$, and $\phi$ is a non-linear function applied element-wise. The output ${\V{Y}}_{W}(\V{X}, \V{A}) = \V{X}^{[k]}$ is obtained after $k$ rounds of message passing, and the parameters are gathered as $W = \{ \V{W}_1^{[l]}, \V{W}_2^{[l]}, \V{b}^{[l]}\}_{l=1}^k$.

\subsection{Bilevel optimization for graph learning}
We consider the case where the graph objective function is a function of the trained classifier, that is, we look at a \emph{bilevel optimization}. Using a second set of labelled nodes $V_{out} \subset V$ distinct from $V_{tr}$  and given a set of admissible adjacency matrices $\mathcal{A}$, the bilevel optimization is cast as
\begin{equation}\label{eq:bilevel_problem_learn_A}
\widehat{\V{A}}\!\in\!\argmin_{\V{A}\in \mathcal{A}} F_{out}(\V{A})\!=\!\tfrac{1}{|V_{out}|} \sum_{i\in V_{out}}  \ell(\V{Y}(\V{A})_i, (\V{Y}_{obs})_i),
\end{equation}
such that
$\V{Y}(\V{A}) = \V{Y}_{\GCN}(\V{A})$ or $\V{Y}(\V{A}) 
 = \V{Y}_{\Reg}(\V{A})$.
That is, the minimization of the objective function $F_{out}$, called the \emph{outer} optimization problem, involves $\V{Y}(\V{A})$, which is itself the result of an \emph{inner} optimization problem, either \eqref{eq:inner-problem-laplace} over $\V{Y}$ or \eqref{eq:inner-problem-gcn} over $W$.
Several models are possible for $\mathcal{A}$:

\textbf{Full learning}: $\mathcal{A}=[a,b]^{n \times n}$ is the set of all weighted adjacency matrices (generally with some bounds $a,b$ on the weights). This choice necessarily leads to an impractical quadratic complexity on the minimization.

\textbf{Edge refinement}: the learned adjacency matrix has the same zero-pattern as the observed adjacency matrix, that is, we learn weights only on existing edges.
\[
    \mathcal{A} = \{
    \V{A} \in [a,b]^{n \times n} | \V{A}_{ij} = 0 \text{ when } (\V{A}_{obs})_{ij} = 0 
    \} .
\]
The complexity is proportional to the number of edges, generally less than quadratic in $n$ as graphs tend to be sparse.

\textbf{Generalized edge refinement}: same principle, but the zero-pattern is given by a modification of the observed adjacency matrix. For instance, taking the zero-pattern of $\V{A}_{obs}^r$ yields an edge between neighbors that are less than $r$-hop from each other in $\calG_{obs}$, where nodes $i$ and $j$ are $r$-hop from each other if the length of the shortest path between them in $\calG_{obs}$ is $r$.

\textbf{Latent graph learning}: the learned graph is the output of a parametric model, that takes as input the node features and the observed graph: $\mathcal{A} = \{\V{A} = f_\theta(\V{A}_{obs}, \V{X})\}$. We will refer to such models as \emph{Graph-to-Graph} (\GtoG).

Both the inner and the problems are treated by a gradient-based algorithms. We refer to the outer gradient $\nabla F_{out}$, whether with respect to $\V{A}$ or $\theta$, as \emph{hypergradient}.

\subsection{Contributions}
Previous works observed gradient scarcity when learning the graph and a \GCN classifier with joint optimization.
Indeed, a $k$-layer \GCN computes the label of a node using only information from $r$-hops far nodes with $r\leq k$.
This label is then not a function of edges connecting nodes outside of this neighborhood, and the term in the classification loss corresponding to this label returns null gradients on those distant edges.
However, it is not straightforward to extend this argument for bilevel optimization.
Specifically, the previous discussion assumes that the trained weights of the \GCN after gradient-based do not depend on the adjacency matrix $\V{A}$, which is not the case in the bilevel setting.
Moreover, if the problem holds in this setting, the roles of $V_{tr}$ and $V_{out}$ need to be clarified. 
Another question is if this problem is mitigated by resorting to graph-based models with infinite receptive field, \eg the Laplacian regularization.

In this work, \textbf{we prove that hypergradient scarcity occurs under the bilevel optimization setting when adopting \GCNs as a classifier.} We show that using a $k$-layer \GCN induces null hypergradients on edges between nodes $k$-hop from labelled nodes in $V_{tr}\cup V_{out}$.
\textbf{For the Laplacian regularization, we prove that the problem persists}, as hypergradients are exponentially damped with distance from labelled nodes.
\textbf{We empirically validate our findings}. Then, we test three possible strategies to solve this issue: latent graph learning with \GtoG models, graph regularization and refining a power of the given adjacency matrix. Furthermore, we empirically \textbf{distinguish between hypergradient scarcity and overfitting}, in the sense that solving the former does not necessarily resolve the latter. 
To the best of our knowledge, this is the first work that mathematically tackles the gradient scarcity problem for bilevel optimization of graphs, and examines the phenomenon for models with infinite receptive field.

\section{Related work}\label{sec:related_work_bias}
Bilevel optimization is used in many applications like multi-task and meta learning \cite{bennett2006model,flamary2014learning,franceschi2018bilevel}. See \cite{colson2007overview} for a review of applications in different fields.

Graph learning gained in importance since real-world graphs usually have corrupted edges.
The first way used to construct these graphs might be the $k$-nearest neighbors technique \cite{roweis2000nonlinear,tenenbaum2000global} and its variants, but with shortcomings: we have to choose the number of nearest neighbors to consider and the associated similarity criterion. Here, we consider situations where the graph learning problem are formulated as a \emph{supervised} bilevel optimization problem.
In \cite{franceschi2019learning}, authors learn the parameters of Bernoulli probability distributions over independent random edges.
The problem is similarly framed as a bilevel optimization, where these parameters are optimized to minimize the \GCN's validation loss.
Similar to \cref{eq:bilevel_problem_learn_A} with full learning  of $\V{A}$, this method includes learning $n^2$ parameters which limits scalability.
In \cite{stretcu2019graph}, a state-of-the-art method referred to as Graph Agreement Model ($GAM$) is proposed to learn graphs for SSL problems by penalizing the absence of an edge between nodes with the same label. Thereby the penalty is not explicitly a function of the used \GCN model, and the problem isn't bilevel.
In attention mechanisms, edge weights are re-evaluated after each \GCN layer based on similarity between node representations, \ie edge refinement. The similarity criterion is either user-defined like the dot product \cite{luong-etal-2015-effective,vaswani2017attention}, learned locally at each layer by a single-layer feed-forward network \cite{velickovic2018graph}, or a combination of both schemes \cite{kim2021how}.
In contrast to bilevel optimization, these mechanisms are trained with the \GCN model using joint optimization.
To alleviate overfitting resulted from learning the \GCN parameters and edge weights together, \cite{wang2020unifying} makes use of the Label Propagation model (LPA) \cite{zhu2005semi} to regularize the graph.
The proposed framework produces state-of-the-art results on node classification tasks.
However, authors adopt the joint optimization scenario.

Gradient scarcity was studied in \cite{fatemi2021slaps} where the authors looked at this problem with the intuition that learning a graph in SSL problems is done to improve performance in the downstream task, thus optimizing both requires such supervision that is not available in small labelled subsets.
Then, for downstream tasks adopting a $k$-layer \GCN classifier (with $k=2$ in their case), they identified what they refer to as the \emph{supervision starvation problem}, which states that edges between unlabelled nodes do not receive any supervision if they are at least $2$-hop from labelled nodes.
They quantify the starvation for the special case of Erdös-Rényi graphs.
Note that gradient scarcity and supervision starvation refer to the same phenomenon.

This issue cannot be resolved by adding more layers to the \GCN as this will increase its complexity on one hand, which means more data and labels are needed, and due to the oversmoothing issue on the other hand \cite{keriven2022not}.
To mitigate this issue and provide more supervision on the graph level, authors make use of the assumption that a good graph does not only perform well in labelling nodes, but also in denoising node features.
Therefore, they regularize the learned graph by a contrastive loss \cite{liu2021self,wu2021self,liu2022graph}, which evaluates its denoising performance, which overall results in a uni-level optimization.

That said, authors implicitly assumed no dependence between the \GCN weights and the graph when identifying gradient scarcity, which is the case in joint/alternating optimization schemes.
To the best of our knowledge, this issue has not yet been studied for the bilevel optimization setting.
Moreover, it is not clear if this problem is resolved with graph-based models with infinite receptive field, \eg the Laplacian regularization. We treat both these topics in our work.

In \cite{liu2022towards}, authors state that optimizing both the graph and a \GCN model under the supervision of a classification task introduces reliance on available labels, bias in the edge distribution and even reduce the span of potential application tasks.
Still, this statement is not accompanied with a theoretical justification, especially regarding the first two consequences.
To overcome this problem, authors suggested to avoid label-based graph optimization, and proposed an \emph{unsupervised} graph learning framework based on contrastive learning.
Although the unsupervised framework proved effective and competed state-of-the-art methods, we believe that labels contain informative knowledge that is not exploited when deploying unsupervised learners, and that better results are obtained by getting the best of both worlds.

\section{Hypergradient scarcity with \GCNs}\label{sec:gcn_bias}

In this section, we consider the bilevel optimization \eqref{eq:bilevel_problem_learn_A} in the \emph{edge refinement} setting, \ie we optimize the weight of every existing edge in $\V{A}_{obs}$, and the \GCN case $\V{Y}(\V{A}) = \V{Y}_{\GCN}(\V{A}) = \V{Y}_{W^\star}(\V{A}, \V{X})$.

In \cite{fatemi2021slaps}, the authors demonstrated that the predicted node label using a $2$-layer \GCN integrates information from nodes of distance less than two hops, \ie the label is not a function of edges connecting  nodes at least $2$-hop far away.
Consequently, when optimizing the graph by minimizing the classification error of that label via a gradient-based algorithm, these edges receive zero-valued gradients.
However, the authors used joint (or alternating) optimization of both the \GCN weights and the adjacency matrix, where the dependency between $W$ and $\V{A}$ is dropped, \ie $\V{J}_{W} (\V{A}) = \V{0}$.
\emph{This is not the case for bilevel optimization}.
In this section, we first examine the joint/alternating optimization schemes, and prove the existence of the problem for a generic number of layers $k$, similar to \cite{fatemi2021slaps}.
For the bilevel optimization setting, we then prove that the optimized weights $W^\star$ are not a function of edges connecting nodes at least $k$-hop from nodes in $V_{tr}$.
After that, we conclude that hypergradient scarcity holds in the bilevel setting for edges connecting nodes at least $k$-hop from nodes in the \emph{union} $V_{tr} \cup V_{out}$.

\subsection{Scarcity for joint or alternating optimization}

In this first result, we will assume that the weights $W$ do not depend on $\V{A}$, as is the case in joint/alternating minimization, and show gradient scarcity in $\V{Y}_W(\V{A}, \V{X})$.
This result uses the fact that $W$ does not depend on $\V{A}$, an hypothesis which is no longer satisfied in bilevel optimization.

\begin{theorem}\label{theorem:bias_gcns}
    Let $\V{Y}_W = \V{Y}_W(\V{A}, \V{X})$ be the output of a $k$-layer \GCN parameterized by $W$.
    Let $i,j,u$ be such that nodes $i,j$ are at least $k$-hop from node $u$.
    Assume that $\frac{\partial W}{\partial\V{A}_{i,j}} = \V{0}$. Then:
	  \begin{equation}\label{eq:zerograd_alternating}
            \frac{\partial (\V{Y}_W)_u}{\partial\V{A}_{i,j}} = 0  \enspace .
	  \end{equation}
\end{theorem}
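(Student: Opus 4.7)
The plan is to prove by induction on the layer index $l \in \{0, 1, \ldots, k\}$ the stronger statement that $\partial \V{X}^{[l]}_u / \partial \V{A}_{i,j} = 0$ whenever $d(u, i) \geq l$, where $d$ denotes graph distance in the graph encoded by $\V{A}$. Applied at $l = k$, this immediately yields the theorem from the hypothesis $d(u,i) \geq k$. Throughout the argument, the assumption $\partial W / \partial \V{A}_{i,j} = \V{0}$ allows us to treat $\V{W}_1^{[l]}, \V{W}_2^{[l]}$ and $\V{b}^{[l]}$ as constants with respect to $\V{A}_{i,j}$, so that only the explicit $\V{A}$ and the propagated $\V{X}^{[l-1]}$ contribute to the derivative.

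The base case $l = 0$ is immediate since $\V{X}^{[0]} = \V{X}$ does not depend on $\V{A}$. For the inductive step, I differentiate \eqref{eq:gcn_layer} row-wise at node $u$: the chain rule through $\phi$ produces three types of contributions. The first, $\partial \V{X}^{[l-1]}_u / \partial \V{A}_{i,j}$, vanishes by the induction hypothesis since $d(u,i) \geq l \geq l-1$. The second is the explicit-occurrence term $\mathbb{1}_{i=u} \V{X}^{[l-1]}_j \V{W}_2^{[l]}$, which vanishes because $d(u,i) \geq l \geq 1$ rules out $i=u$. The third is the sum $\sum_v \V{A}_{u,v} \partial \V{X}^{[l-1]}_v / \partial \V{A}_{i,j}$: for any $v$ with $\V{A}_{u,v} \neq 0$ one has $d(u,v) \leq 1$, so the triangle inequality gives $d(v, i) \geq d(u,i) - d(u,v) \geq l - 1$, and the induction hypothesis at level $l-1$ kills the corresponding derivative.

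The main subtlety is choosing the right form of the induction hypothesis. A naive formulation requiring \emph{both} endpoints of the offending edge to be far from $u$ fails to close under the triangle-inequality step in the third contribution; the correct asymmetric hypothesis constrains only $d(u,i)$, reflecting the asymmetric roles of the two indices in the recursion (the first index appears explicitly in $\V{A}_{u,v}$, while the second enters only through $\V{X}^{[l-1]}_j$, whose dependence on $\V{A}_{i,j}$ is already controlled at the previous level). Since the adjacency is symmetric in the graphs considered here, the hypothesis $d(u,j) \geq k$ in the theorem then guarantees that the derivative with respect to $\V{A}_{j,i}$ also vanishes, completing the argument.
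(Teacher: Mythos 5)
Your proof is correct and follows essentially the same route as the paper's: induction on the layer index, with a trivial base case and an inductive step that combines the hypothesis $\frac{\partial W}{\partial \V{A}_{i,j}}=\V{0}$ with the triangle inequality applied to the neighbours $v$ of $u$. One remark: the ``subtlety'' you highlight is spurious --- the symmetric induction hypothesis requiring both $d(u,i)\geq l$ and $d(u,j)\geq l$ closes perfectly well, since for any neighbour $v$ of $u$ each of the two distances drops by at most one; this symmetric form is exactly what the paper uses, and your asymmetric variant (with the symmetric entry $\V{A}_{j,i}$ handled separately at the end) is a valid but unnecessary refinement rather than a needed fix.
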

\begin{proof}
    The proof is done by induction on $k$.
    For $k=1$, this is indeed the case since $\V{X}^{[0]} = \V{X}$ does not depend on $\V{A}$, and that $\V{A}_{i,j}$ does not belong to the row $\V{A}_{u,:}$ which is the only row in $\V{A}$ that contributes in the value $(\V{X}^{[1]})_{u,:}$.

    Assume that the statement is true for some arbitrary positive integer $k$, we show that it is also true for a $k+1$-layer \GCN.
    If $i,j$ are at least $(k+1)$-hop far from $u$, then clearly they are at least $k$-hop far from it too.
    Thus from the induction assumption, we have that $(\V{X}^{[k]})_{u,:}$ is independent of $\V{A}_{i,j}$.
    Also, $\V{W}_1^{[k+1]}$ does not depend on $\V{A}_{i,j}$ since we assume $\frac{\partial W}{\partial\V{A}_{i,j}} = \V{0}$. Therefore, $(\V{X}^{[k]} \V{W}_1^{[k+1]})_{u,:}$ in \eqref{eq:gcn_layer} does not depend on $\V{A}_{i,j}$ too.

    In a similar way, if $i,j$ are at least $(k+1)$-hop far from $u$, then they are at least $k$-hop far from any of its neighbors $v$ where $\V{A}_{u,v}\neq 0$.
    Therefore, if for all $v$, $\V{A}_{u,v}\neq 0$, then $\frac{\partial (\V{X}^{[k]})_{v,:}}{\partial\V{A}_{i,j}}= \V{0}$.
    Moreover, $\frac{\partial \V{W}_2^{[k+1]}}{\partial\V{A}_{i,j}} = \V{0}$ since we assume $\frac{\partial W}{\partial\V{A}_{i,j}} = \V{0}$.
    This makes $(\V{A}\V{X}^{[k]} \V{W}_2^{[k+1]})_{u,:} = \V{A}_{u,:}\V{X}^{[k]} \V{W}_2^{[k+1]}$ in \eqref{eq:gcn_layer} independent of $\V{A}_{i,j}$. This concludes the proof, as $ \frac{\partial (\V{Y}_W)_u}{\partial\V{A}_{i,j}} = \frac{\partial (\V{X}^{[k+1]})_u}{\partial\V{A}_{i,j}}= \V{0}$.
\end{proof}

\subsection{Gradient of the optimized weights}

\cref{theorem:bias_gcns} assumes that  $W$ is not a function of the edge
$\V{A}_{i,j}$, and states, in such case, that edges between nodes at least $k$-hop from the training nodes used to optimize the graph ($V_{out}$ in our case) receive no supervision.
However, in the bilevel optimization scenario, after the first outer iteration $W$ may depend on $\V{A}$.
The next theorem shows that gradient scarcity still occurs in the bilevel optimization framework, as the ``optimal'' weights used in practice are the result of a gradient-based algorithm. 
More precisely, we consider a sequence
    \begin{equation}\label{eq:GD}
        W_{t+1} = W_t - Q_t(W_t, \nabla_{W_t} F_{in}) \enspace,
    \end{equation}
where $Q_t$ is a smooth function. Note that $W_t$ does not necessarly converges towards the true optimal point $W^\star$
\begin{theorem}\label{theorem:W_dependent_of_far_edges_bilevel}
	Let $\V{A}$ be an input graph to a $k$-layer \GCN with weights $W$, and $W_t$ be the output obtained by optimizing \eqref{eq:inner-problem-gcn} for $W$ using a gradient-based iterates sequence.
    Let $i,j$ be nodes that are at least $k$-hop from any node in $V_{tr}$.
    Then, for all $t \in \mathbb{N}$,
	\begin{equation}\label{eq:zerograd-gnn-iterates}
		\frac{\partial W_t(\V{A})}{\partial\V{A}_{i,j}} = \V{0} \enspace.
	\end{equation}
\end{theorem}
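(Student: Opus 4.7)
The plan is to prove the statement by induction on the iteration index $t$, using Theorem~\ref{theorem:bias_gcns} at each step to handle the explicit dependence on $\V{A}$ through $F_{in}$.

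For the base case $t=0$, I would argue that the initialization $W_0$ of any standard gradient-based scheme is chosen independently of $\V{A}$ (e.g., random or deterministic seeding), so trivially $\partial W_0 / \partial \V{A}_{i,j} = \V{0}$. For the induction step, assuming $\partial W_t / \partial \V{A}_{i,j} = \V{0}$, I would apply the chain rule to \eqref{eq:GD}: writing $g(W, \V{A}) = \nabla_W F_{in}(W, \V{A})$, the total derivative is
\begin{equation*}
\tfrac{\partial W_{t+1}}{\partial \V{A}_{i,j}} = \tfrac{\partial W_t}{\partial \V{A}_{i,j}} - \partial_1 Q_t \cdot \tfrac{\partial W_t}{\partial \V{A}_{i,j}} - \partial_2 Q_t \cdot \left( \partial_W g \cdot \tfrac{\partial W_t}{\partial \V{A}_{i,j}} + \partial_{\V{A}_{i,j}} g\big|_{W=W_t} \right).
\end{equation*}
By the inductive hypothesis, all terms containing $\partial W_t/\partial \V{A}_{i,j}$ vanish, so it only remains to show that the ``direct'' partial $\partial_{\V{A}_{i,j}} g(W, \V{A}) \big|_{W=W_t} = \V{0}$.

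To establish this last piece, I would unpack $F_{in}(W, \V{A}) = \frac{1}{|V_{tr}|} \sum_{u \in V_{tr}} \ell((\V{Y}_W(\V{X},\V{A}))_u, (\V{Y}_{obs})_u)$ and compute its gradient with respect to $W$ with $W$ treated as a free variable (so $\partial W / \partial \V{A}_{i,j} = \V{0}$ formally). The hypothesis of Theorem~\ref{theorem:bias_gcns} is then satisfied, and since every $u \in V_{tr}$ is at least $k$-hop from both $i$ and $j$, the theorem yields $\partial (\V{Y}_W)_u / \partial \V{A}_{i,j} = 0$ for each $u \in V_{tr}$. Since this independence holds as a function of $\V{A}_{i,j}$ at any fixed $W$, the same is true after applying $\nabla_W$ (which commutes with the $\V{A}$-derivative, as $W$ and $\V{A}$ are independent variables here), so $\partial_{\V{A}_{i,j}} g(W, \V{A}) = \V{0}$, and in particular at $W = W_t$. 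This closes the induction.

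The main subtlety I expect is the clean separation between the ``bilevel'' partial derivative $\partial W_t / \partial \V{A}_{i,j}$, which threads back through all prior iterates, and the ``direct'' partial of $\nabla_W F_{in}$ with $W$ frozen; the inductive hypothesis is precisely what lets one reduce the former to the latter and then invoke Theorem~\ref{theorem:bias_gcns}. A secondary care point is to justify rigorously that any standard initialization $W_0$ is $\V{A}$-independent (or at least has zero partial with respect to individual entries), which should be stated as a standing assumption rather than proven.
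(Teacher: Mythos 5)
Your proposal is correct and follows essentially the same route as the paper: induction on the iteration index $t$, an $\V{A}$-independent initialization for the base case, and a reduction via the chain rule to showing that $\nabla_{W}F_{in}$ has zero partial derivative with respect to $\V{A}_{i,j}$, which is obtained from Theorem~\ref{theorem:bias_gcns} since every $u\in V_{tr}$ is at least $k$-hop from $i,j$. Your explicit separation of the ``direct'' and ``indirect'' partials of $\nabla_W F_{in}$ makes the chain-rule step slightly more transparent than the paper's wording, but it is the same argument.
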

\begin{proof}
    The proof is carried out by induction on the iteration index $t$ of the gradient-based optimizer. Denote by $F_{in}$ the objective function in \eqref{eq:inner-problem-gcn}.
    For $t = 0$, $W_0$ is the initialization of $W$ which is usually random and does not depend on $\V{A}$.
    For $t\geq 0$, we assume that $\frac{\partial W_t}{\partial\V{A}_{i,j}} = \V{0}$ and prove this must be true for $t+1$.
   By the chain rule, proving that $\frac{\partial (\nabla_{W_t} F_{in})}{\partial\V{A}_{i,j}} = \V{0}$ is sufficient to complete the proof.
    The gradient $\nabla_{W_t} F_{in}$ writes:
  \begin{equation*}
  \nabla_{W_t}F_{in}=\frac{1}{|V_{tr}|}\sum_{u\in V_{tr}}  \nabla_{W_t}\ell\Big(\big({\V{Y}}_{W_t}(\V{X}, \V{A})\big)_u, (\V{Y}_{obs})_u\Big)\enspace.
  \end{equation*}
   For all $u\in V_{tr}$, the term $\nabla_{W_t}\ell\Big(\big({\V{Y}}_{W_t}(\V{X}, \V{A})\big)_u, (\V{Y}_{obs})_u\Big)$ is a function of $W_t$ and $\big({\V{Y}}_{W_t}(\V{X}, \V{A})\big)_u$.
    But $\frac{\partial W_t}{\partial\V{A}_{i,j}} = \V{0}$ from the induction assumption, and, given that, we have $\frac{\partial \big({\V{Y}}_{W_t}(\V{X}, \V{A})\big)_u}{\partial\V{A}_{i,j}} = 0$ from \cref{theorem:bias_gcns}.
    Thus, we have for all $u \in V_{tr}$, $\frac{\partial}{\partial\V{A}_{i,j}} \nabla_{W_t}\ell\Big(\big({\V{Y}}_{W_t}(\V{X}, \V{A})\big)_u, (\V{Y}_{obs})_u\Big)= \V{0}$.
    This concludes the proof of~\eqref{eq:zerograd-gnn-iterates} as it gives $\frac{\partial (\nabla_{W_t} F_{in})}{\partial\V{A}_{i,j}} = \V{0}$.
\end{proof}

\subsection{Hypergradient scarcity}

Finally, we put the two previous results together. The next theorem states that within the bilevel optimization framework, edges between nodes at least $k$-hop from nodes in $V_{tr} \cup V_{out}$ receive no supervision.

\begin{theorem}\label{theorem:gcn_bias_bilevel}
	Let $\V{Y}_W$ be a $k$-layer \GCN. Assume that the inner optimization problem is solved with a gradient-based algorithm \eqref{eq:GD}.
    Then, for any pair of nodes $i,j$ at least $k$-hop from nodes in $V_{out}\cup V_{tr}$, we have $\frac{\partial F_{out}}{\partial\V{A}_{i,j}}= \V{0}$.
\end{theorem}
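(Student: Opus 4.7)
The plan is to obtain the hypergradient by the chain rule and then cancel each piece using the two preceding theorems. Writing the outer loss explicitly,
\begin{equation*}
F_{out}(\V{A}) = \tfrac{1}{|V_{out}|}\sum_{u \in V_{out}} \ell\bigl((\V{Y}_{W_T}(\V{X},\V{A}))_u,(\V{Y}_{obs})_u\bigr),
\end{equation*}
where $W_T$ denotes the output of the gradient-based inner solver \eqref{eq:GD} after some finite number $T$ of steps. Since $\V{A}$ appears both as a direct argument of the \GCN and implicitly through $W_T(\V{A})$, the chain rule gives, for a fixed edge $\V{A}_{i,j}$,
\begin{equation*}
\tfrac{\partial F_{out}}{\partial \V{A}_{i,j}}
= \tfrac{1}{|V_{out}|}\sum_{u\in V_{out}} \nabla_1 \ell\bigl((\V{Y}_{W_T})_u,(\V{Y}_{obs})_u\bigr)\,\tfrac{\mathrm{d} (\V{Y}_{W_T}(\V{X},\V{A}))_u}{\mathrm{d} \V{A}_{i,j}},
\end{equation*}
and I would further split the total derivative of $(\V{Y}_{W_T})_u$ into the partial with $W_T$ held fixed plus the contribution routed through $\partial W_T/\partial \V{A}_{i,j}$.

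I would then kill each piece separately. First, since $i$ and $j$ are by assumption at least $k$-hop from every node in $V_{tr}$, \cref{theorem:W_dependent_of_far_edges_bilevel} applies and yields $\partial W_T/\partial \V{A}_{i,j} = \V{0}$, so the indirect branch of the chain rule vanishes identically. Second, equipped with this identity, the hypothesis of \cref{theorem:bias_gcns} is now satisfied at $W=W_T$, so for any $u$ at least $k$-hop from $\{i,j\}$ the direct partial $\partial (\V{Y}_{W_T})_u / \partial \V{A}_{i,j}$ is zero. Since every $u \in V_{out}$ is by assumption at least $k$-hop from $i,j$, the direct branch also vanishes term by term in the sum.

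Combining these two observations, both contributions in the chain-rule expansion are zero for every $u \in V_{out}$, hence $\partial F_{out}/\partial \V{A}_{i,j} = 0$, which is exactly the claim. The proof is essentially a bookkeeping exercise; the only subtle point is making sure one invokes \cref{theorem:W_dependent_of_far_edges_bilevel} before \cref{theorem:bias_gcns}, because the latter requires $\partial W / \partial \V{A}_{i,j} = \V{0}$ as a hypothesis, and this hypothesis is only available thanks to the inductive argument over gradient iterates used in \cref{theorem:W_dependent_of_far_edges_bilevel}. I do not anticipate any genuine obstacle beyond carefully tracking the two distance conditions ($k$-hop from $V_{tr}$ to apply the weight result, $k$-hop from $V_{out}$ to apply the output result), which together give the $V_{tr}\cup V_{out}$ condition stated in the theorem.
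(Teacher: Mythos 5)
Your proof is correct and follows essentially the same route as the paper: invoke \cref{theorem:W_dependent_of_far_edges_bilevel} first to get $\partial W_t/\partial \V{A}_{i,j}=\V{0}$ from the $k$-hop condition on $V_{tr}$, then apply \cref{theorem:bias_gcns} to each $u\in V_{out}$ using the $k$-hop condition on $V_{out}$, and conclude since $F_{out}$ only sums over $V_{out}$. Your explicit chain-rule split into direct and indirect branches is slightly more detailed bookkeeping than the paper's version, but the argument is the same.
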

\begin{proof}
    Directly from \cref{theorem:W_dependent_of_far_edges_bilevel} we have that $\frac{\partial W_t(\V{A})}{\partial\V{A}_{i,j}} = \V{0}$ since $i,j$ are at least $k$-hop far from nodes in $V_{tr}$.
    This makes it possible to apply \cref{theorem:bias_gcns} to get that  $\forall u \in V_{out};~\frac{\partial ({\V{Y}}_{W_t})_u}{\partial\V{A}_{i,j}} = \V{0}$, as  $i,j$ are at least $k$-hop far from nodes in $V_{out}$ and $\frac{\partial W_t(\V{A})}{\partial\V{A}_{i,j}} = \V{0}$.
    This concludes the proof as $F_{out}$ penalizes the classification error only on nodes in $V_{out}$.
\end{proof}

Theorem~\ref{theorem:gcn_bias_bilevel} shows that the hypergradient scarcity problem emerges when solving edge refinement tasks: if two nodes are at least $k$-hop far from nodes in $V_{out}\cup V_{tr}$ in $\V{A}_{obs}$, the edge in between receives no hypergradients. In Section~\ref{sec:solution}, we will examine several strategies to mitigate this phenomenon.

\section{hypergradient scarcity with the Laplacian regularization}

In the previous section, we have seen how the finite receptive field of \GCNs directly induces the gradient scarcity problem. We now examine hypergradient scarcity when $\V{Y}(\V{A}) = \V{Y}_{\Reg}(\V{A})$ with the Laplacian regularization  \eqref{eq:lapl_reg}.
Indeed, in this case the inner problem \eqref{eq:inner-problem-laplace} does not have a finite receptive field, in the sense that in general $\frac{\partial \V{Y}(\V{A})}{\partial \V{A}_{ij}} \neq 0$ for all $i,j$, unlike the \GCN case as proven by \cref{theorem:bias_gcns}.

Surprisingly, we show that gradient scarcity still occurs in some sense.
More precisely, we prove that the magnitude of hypergradients decreases exponentially with the sum of the distance to $V_{tr}$ and the distance to $V_{out}$.

We consider the case where the downstream task is a regression problem, \ie $\ell$ in \cref{eq:inner-problem-laplace,eq:bilevel_problem_learn_A} is the MSE loss function.
Let $\V{S}_{in}\in\bbR^{n\times n}$ be the diagonal selection matrix whose entries equal $1$ if the corresponding node is in $V_{tr}$ and $0$ otherwise, the  solution $\V{Y}(\V{A})$ enjoys a closed-from expression:
\begin{align*}
	\V{Y}(\V{A}) = \left(\tilde{\V{S}}_{in} + \lambda \tilde{\V{L}} \right)^{-1}\tilde{\V{S}}_{in} \V{Y}_{obs}\enspace,
\end{align*}
where $\tilde{\V{S}}_{in} = \frac{\V{S}_{in}}{|V_{tr}|}$ and $\tilde{\V{L}} = \frac{\V{L}}{|E|}$.
For simplicity from now on, we denote $\V{B} = \tilde{\V{S}}_{in} + \lambda \tilde{\V{L}}$.
Then, we write $\V{Y}(\V{A})$ as:
\begin{equation}\label{eq:Laplacian_denoising_close_form_solution}
	\V{Y}(\V{A}) =\V{B}^{-1}\tilde{\V{S}}_{in}\V{Y}_{obs}\enspace.
\end{equation}
It is well-defined thanks to the following result.
\begin{lemma}\label{lemma:norm_strict_smaller_1}
    Assume that the graph is connected. The eigenvalues $\mu_i$ of $\V{B}$ satisfy, for all $i$:
    \begin{equation}
        0 < \mu_{\min} \leq \mu_i \leq \mu_{\max} \leq \frac{1}{|V_{tr}|} + 2\lambda\enspace .
    \end{equation}
\end{lemma}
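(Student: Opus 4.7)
The plan is to decompose the statement into three pieces: non-negativity of every $\mu_i$ (which follows for free from the structure of $\V{B}$), strict positivity $\mu_{\min}>0$, and the upper bound $\mu_{\max}\leq 1/|V_{tr}|+2\lambda$. First I would observe that $\V{B}=\tilde{\V{S}}_{in}+\lambda\tilde{\V{L}}$ is symmetric, since $\tilde{\V{S}}_{in}$ is diagonal and $\tilde{\V{L}}$ is a graph Laplacian, and that it is a sum of two symmetric PSD matrices ($\tilde{\V{S}}_{in}$ has non-negative diagonal entries and $\tilde{\V{L}}$ is a Laplacian). Hence $\mu_i\geq 0$ for all $i$, and the content of the lemma lies in the strict lower bound and the explicit upper bound.

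For the strict positivity, I would argue via the Rayleigh quotient: if $\mu_{\min}=0$, there exists a unit vector $\V{v}$ with $\V{v}^{\top}\V{B}\V{v}=0$. Since $\V{v}^{\top}\tilde{\V{S}}_{in}\V{v}\geq 0$ and $\V{v}^{\top}\tilde{\V{L}}\V{v}\geq 0$, both quadratic forms must vanish. The identity $\V{v}^{\top}\V{L}\V{v}=\sum_{i,j}\V{A}_{ij}(\V{v}_i-\V{v}_j)^2=0$ combined with the connectedness assumption forces $\V{v}$ to be constant on $V$, so $\V{v}=c\V{1}_n$ for some scalar $c$. Then $\V{v}^{\top}\tilde{\V{S}}_{in}\V{v}=\frac{c^2|V_{tr}|}{|V_{tr}|}=c^2$ must also vanish, which requires $c=0$ and contradicts $\|\V{v}\|=1$. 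This part is the real heart of the lemma; it uses both the connectedness hypothesis and the (implicit) non-emptiness of $V_{tr}$ symmetrically.

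For the upper bound, I would apply the triangle inequality for the operator norm to the symmetric PSD matrices at hand, writing $\mu_{\max}=\|\V{B}\|\leq \|\tilde{\V{S}}_{in}\|+\lambda\|\tilde{\V{L}}\|$. The first summand equals $1/|V_{tr}|$ by inspection of the diagonal. For the second, I would use that a PSD matrix satisfies $\|\tilde{\V{L}}\|\leq \mathrm{Tr}(\tilde{\V{L}})$ and compute $\mathrm{Tr}(\V{L})=\sum_i \V{D}_{ii}=\sum_{i,j}\V{A}_{ij}=2\sum_{i<j}\V{A}_{ij}$; under the admissibility constraint $\V{A}_{ij}\in[a,b]$ with $b\leq 1$ this is bounded by $2|E|$, giving $\|\tilde{\V{L}}\|\leq 2$ and hence the claimed bound.

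I expect the only genuine obstacle to be the strict-positivity step, since the upper bound is a routine triangle-inequality estimate. There, the care is to chain the two pieces of information correctly: the connectedness of $\calG$ pins down the kernel of $\tilde{\V{L}}$ to the constant vectors, and the non-emptiness of $V_{tr}$ then eliminates this remaining one-dimensional subspace through the selection matrix.
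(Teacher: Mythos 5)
Your proposal is correct and follows essentially the same route as the paper: the upper bound via the triangle inequality on $\|\tilde{\V{S}}_{in}\|+\lambda\|\tilde{\V{L}}\|$, and strict positivity by showing that a vector annihilating the quadratic form of $\V{B}$ must lie in the kernel of $\V{L}$ (hence be constant, by connectedness) and simultaneously vanish on $V_{tr}$, forcing it to be zero. The only cosmetic difference is that you bound $\|\tilde{\V{L}}\|\leq 2$ via the trace of a PSD matrix, whereas the paper uses $\|\V{L}\|\leq 2d_{\max}\leq 2|E|$; both are routine and both implicitly need the edge weights bounded by $1$.
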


Given that, we now state the main result of this section.
\begin{theorem}\label{theorem:laplacian}
    Let nodes $i,j$ be at least $k$-hop from $V_{out}$, and ${q}$-hop from $V_{tr}$. Then we have:
    \begin{equation}
        \left|\frac{\partial F_{out}}{\partial \V{A}_{ij}} \right| \lesssim \lambda\frac{\sqrt{|V_{out}|}+\mu_{\min}\sqrt{|V_{tr}|}|V_{out}|}{\mu_{\min}^3|V_{tr}||E|}  y_\infty^2 (1-\mu)^{q+k}\enspace,
    \end{equation}
    where $\mu = \frac{\mu_{\min}}{\mu_{\max}}$ and $y_\infty = \|\V{Y}_{obs}\|_\infty$.
\end{theorem}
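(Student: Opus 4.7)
The plan is threefold: (i) derive a compact expression for $\partial F_{out}/\partial \V{A}_{ij}$ by implicit differentiation of the closed form \eqref{eq:Laplacian_denoising_close_form_solution}, (ii) prove an exponential decay estimate for the entries of $\V{B}^{-1}$ in graph distance, and (iii) insert the distance hypotheses on $i,j$ into this decay to control every factor.

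For step (i), only $\tilde{\V{L}}$ depends on $\V{A}$; treating $\V{A}$ as symmetric, a direct computation gives the rank-one identity $\partial \V{L}/\partial \V{A}_{ij}=(e_i-e_j)(e_i-e_j)^\top$. Combining $\partial \V{B}^{-1}=-\V{B}^{-1}(\partial \V{B})\V{B}^{-1}$ with the MSE chain rule for $F_{out}$ produces
\begin{equation*}
    \frac{\partial F_{out}}{\partial \V{A}_{ij}}=-\frac{2\lambda}{|V_{out}|\,|E|}\,(v_i-v_j)\bigl(\V{Y}(\V{A})_i-\V{Y}(\V{A})_j\bigr),
\end{equation*}
with $v:=\V{B}^{-1}\V{S}_{out}\bigl(\V{Y}(\V{A})-\V{Y}_{obs}\bigr)$ and $\V{S}_{out}$ the $V_{out}$-selector. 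This cleanly separates the hypergradient into a training-side factor $\V{Y}_i-\V{Y}_j$ (which will carry $(1-\mu)^q$) and an outer-side factor $v_i-v_j$ (which will carry $(1-\mu)^k$).

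For step (ii), I would prove $|(\V{B}^{-1})_{u,v}|\leq \mu_{\min}^{-1}(1-\mu)^{d(u,v)}$, where $d(\cdot,\cdot)$ is the shortest-path distance in the support graph of $\V{A}$. By \cref{lemma:norm_strict_smaller_1}, the choice $\tau=1/\mu_{\max}$ makes $I-\tau \V{B}$ symmetric PSD with spectral radius at most $1-\mu$, so the Neumann series $\V{B}^{-1}=\tau\sum_{t\geq 0}(I-\tau \V{B})^t$ converges and $\|(I-\tau \V{B})^t\|_2\leq(1-\mu)^t$. The key combinatorial observation is that $I-\tau \V{B}$ inherits the sparsity pattern of the graph (diagonal plus edges), so $((I-\tau \V{B})^t)_{u,v}=0$ whenever $d(u,v)>t$; the $(u,v)$-entry of the series is then a geometric tail starting at $t=d(u,v)$, which sums to the claimed bound.

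Step (iii) combines these. Expanding $\V{Y}(\V{A})_i=\sum_{u\in V_{tr}}(\V{B}^{-1})_{i,u}(\V{Y}_{obs})_u/|V_{tr}|$ and using $d(i,u)\geq q$ for every $u\in V_{tr}$ yields $|\V{Y}(\V{A})_i|,|\V{Y}(\V{A})_j|\lesssim y_\infty(1-\mu)^q/\mu_{\min}$. Expanding $v_i$ over $V_{out}$, the decay supplies a prefactor $(1-\mu)^k/\mu_{\min}$ and Cauchy--Schwarz reduces the remainder to $\sqrt{|V_{out}|}\,\|\V{S}_{out}(\V{Y}(\V{A})-\V{Y}_{obs})\|_2$; the triangle inequality combined with the spectral bound $\|\V{Y}(\V{A})\|_2\leq \|\V{B}^{-1}\|_2\|\tilde{\V{S}}_{in}\V{Y}_{obs}\|_2\leq y_\infty/(\mu_{\min}\sqrt{|V_{tr}|})$ and $\|\V{S}_{out}\V{Y}_{obs}\|_2\leq \sqrt{|V_{out}|}\,y_\infty$ produces exactly the two summands $\sqrt{|V_{out}|}$ and $\mu_{\min}\sqrt{|V_{tr}|}|V_{out}|$ appearing in the theorem. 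Multiplying everything and absorbing absolute constants into $\lesssim$ delivers the stated inequality. The main obstacle is step (ii): proving the exponential decay at the sharp rate $1-\mu$ and tracking the constants so that the final $\mu_{\min}^{-3}$ prefactor emerges without loss; after that, the remainder is routine linear-algebra bookkeeping.
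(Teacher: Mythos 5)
Your argument is correct in substance and reaches the same exponential rate $(1-\mu)^{q+k}$, but by a genuinely different route than the paper. The paper never writes the resolvent identity: it expands $\V{Y}(\V{A})=\sum_r \V{T}_r$ with $\V{T}_r=(\V{I}-\tilde{\V{B}})^r\mu_{\max}^{-1}\tilde{\V{S}}_{in}\V{Y}_{obs}$, differentiates each $\V{T}_r$ term by term with the product rule (yielding an inner sum over $h$ of products such as $((\V{I}-\tilde{\V{B}})^{r-h})_{ui}(\V{T}_{h-1})_j$), kills the terms with $r-h<k$ or $h-1<q$ by the zero-pattern argument, and re-sums the double series to get $\nu^{q+k}\sum_{r\geq 1} r\nu^{r-1}=\nu^{q+k}/\mu^2$. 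You instead differentiate the closed form once via $\partial\V{B}^{-1}=-\V{B}^{-1}(\partial\V{B})\V{B}^{-1}$ and exploit the rank-one structure of $\partial\V{L}/\partial\V{A}_{ij}$ to factor the hypergradient as $(\V{Y}_i-\V{Y}_j)(v_i-v_j)$; the Neumann series then appears only once, inside a self-contained entrywise decay lemma for $\V{B}^{-1}$. This is cleaner and more transparent: it makes visible that one factor carries $(1-\mu)^{q}$ (distance to $V_{tr}$) and the other $(1-\mu)^{k}$ (distance to $V_{out}$), which the paper's double sum obscures, and it avoids the $(r-q-k)\nu^{r-1}$ counting entirely. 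Two bookkeeping caveats if you want to land exactly on the stated prefactor. First, the paper's proof takes $F_{out}=\|\V{S}_{out}(\V{Y}(\V{A})-\V{Y}_{obs})\|^2$ \emph{without} the $1/|V_{out}|$ normalization of \eqref{eq:bilevel_problem_learn_A}, so your extra $1/|V_{out}|$ shifts the $|V_{out}|$ powers. Second, your entrywise bound $|(\V{B}^{-1})_{iu}|\leq\mu_{\min}^{-1}(1-\mu)^{q}$, summed over $u\in V_{tr}$, only gives $|\V{Y}(\V{A})_i|\lesssim y_\infty(1-\mu)^{q}/\mu_{\min}$, a factor $\sqrt{|V_{tr}|}$ weaker than what the paper extracts; to recover the $|V_{tr}|$ (rather than $\sqrt{|V_{tr}|}$) in the denominator, replace it by the $\ell_2$ row-tail estimate $\bigl(\sum_{u:\,d(i,u)\geq q}(\V{B}^{-1})_{iu}^2\bigr)^{1/2}\leq\mu_{\max}^{-1}\sum_{t\geq q}\|(\V{I}-\tilde{\V{B}})^t\|\leq(1-\mu)^{q}/\mu_{\min}$, paired with $\|\tilde{\V{S}}_{in}\V{Y}_{obs}\|\leq y_\infty/\sqrt{|V_{tr}|}$. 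Neither point touches the exponential rate, which is the actual content of the theorem.
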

Since both $\mu_{\min}, \mu_{\max}$ are strictly positive, as shown in the proof in \cref{sec:proof_laplacian}, then $0<1-\mu<1$.
Therefore, \cref{theorem:laplacian} states that the magnitude of the hypergradient is \emph{exponentially} damped in a speed that is at least proportional to $(1-u)^{q+k}$, leading to a form of hypergradient scarcity. 

The rest of this section is dedicated to proving Lemma \ref{lemma:norm_strict_smaller_1} and Thm.~\ref{theorem:laplacian}.
We first express $\V{Y}(\V{A})$ as a Neumann series, then we bound the derivative of terms in the resulted series, and by extension the gradient of $F_{out}$.

\subsection{Proof of Lemma \ref{lemma:norm_strict_smaller_1} and Neumann series expansion}
In the first step, we re-write the inverse of $\V{B}$ using Neumann series. We first need to prove that $\|\V{I}-\V{B}\| < 1$ (see \eg \cite{stewart1998matrix}), where $\V{I}\in\bbR^{n\times n}$ is the identity matrix. Remark that the eigenvalues of $\V{I} - \V{B}$ are $1-\mu_i$ where $\mu_1,\ldots, \mu_n$ are the eigenvalues of $\V{B}$. Assuming the graph is connected, the ordered eigenvalues $\{\nu_i\}_{i=0}^n$ of $\tilde{\V{L}}$ satisfy:
\begin{equation}
    0=\nu_1 < \nu_2 \leq \ldots \leq \nu_n \leq 2\enspace.
\end{equation}
The last inequality holds because $\|\V{L}\| \leq 2 d_{\max} \leq 2|E|$, where $d_{\max}$ is the maximum degree of the graph. Let  $\V{u}_1, \ldots, \V{u}_n$ be the eigenvectors of $\tilde{\V{L}}$, where $\V{u}_1 \propto \V{1}_n$ is associated to $0$.

\begin{proof}[Proof of Lemma \ref{lemma:norm_strict_smaller_1}]
    We have $\|\tilde{\V{S}}_{in}\| \leq 1/|V_{tr}|$ and $\|\tilde{\V{L}}\| \leq 2$ so by a triangular inequality the upper bound is proved.

    Using the eigendecomposition of $\tilde{\V{L}}$ and recalling that $\nu_1=0$, for any $\V{x} \in \bbR^n$:
    \begin{align*}
        \V{x}^\top \V{B} \V{x} &= \lambda\V{x}^\top \tilde{\V{L}} \V{x} + \V{x}^\top \tilde{\V{S}}_{in} \V{x} \\
        &= \lambda\sum_{i=2}^n (\V{x}^\top \V{u}_i)^2 \nu_i + \frac{\sum_{i \in V_{tr}} \V{x}_i^2}{|V_{tr}|}
    \end{align*}
    which, minimized over the unit sphere, gives the expression of $\mu_{\min}$. It is immediate that $\mu_{\min}\geq 0$. We prove that this value is strictly positive. Indeed, $\V{x}^\top \V{B} \V{x}=0$ implies that $\V{x}^\top \V{S}_{in} \V{x}=0$ and therefore $\V{x}_i=0$ for $i\in V_{tr}$, but also that $\V{L}\V{x}=0$ and therefore that $\V{x}\propto \V{1}_n$, which implies that $\V{x}=0$.
\end{proof}
Let $\tilde{\V{B}} = \V{B}/\mu_{\max}$, with eigenvalues
\begin{equation*}
	\quad 0 \leq 1-\mu_i/\mu_{\max} \leq 1-\mu < 1\enspace,
\end{equation*}
where $\mu = \frac{\mu_{\min}}{\mu_{\max}}$.
Using Neumann expansion, $\Bt ^{-1}$ writes:
\begin{equation}\label{eq:Neumann_series}
	\Bt ^{-1} = \sum_{r=0}^{\infty} (\V{I}-\Bt)^r
	\Rightarrow \V{Y}(\V{A}) = \sum_{r=0}^{\infty} (\V{I}-\Bt)^r \mu_{\max}^{-1}\tilde{\V{S}}_{in} \V{Y}_{obs}\enspace.
\end{equation}
We denote by $\V{T}_r$ the $r$-th term in $\V{Y}(\V{A})$:
\begin{equation}\label{eq:Tr}
	\V{T}_r = (\V{I}-\Bt)^r \mu_{\max}^{-1} \tilde{\V{S}}_{in} \V{Y}_{obs} \enspace.
\end{equation}
Note that since $\|\V{S}_{in}\V{Y}_{obs}\| \leq \sqrt{|V_{tr}|}\|\V{Y}_{obs}\|_\infty$, we have:
\begin{equation}
    \| \V{T}_r \| \leq \frac{\nu^r y_\infty }{\mu_{\max}\sqrt{|V_{tr}|}}\enspace,
\end{equation}
where $y_\infty = \|Y_{obs}\|_\infty$ and $\nu = 1-\mu$. Similarly, $\|\V{Y}({\V{A}})\| \leq \frac{ y_\infty}{\mu_{\min}\sqrt{|V_{tr}|}}$. Moreover, since $\V{I} - \Bt$ has the same zero-pattern than $\V{A}$ (except on the diagonal), if $u$ is more than $r$ hops from $V_{tr}$, we get $(\V{T}_r)_u = 0$.

\subsection{Gradient of $(\V{T}_r)_u$}
In the second step, we derive the formula of the gradient of $(\V{T}_r)_u$ \wrt $\V{A}$, and derive a bound on its magnitude as a function of $r$, $q$ the distance to $V_{tr}$, and $k$ the distance to $V_{out}$.
For $r>0$, the gradient of the $u$-th coefficient in $\V{T}_r$ \wrt $\V{I}-\Bt$ is:
\begin{align*}
	\nabla_{\V{I}-\Bt} (\V{T}_{r})_u =
				 \sum_{h=1}^{r} &\Big(\big((\V{I}-\Bt)^{r-h}\big)_{u,:}\Big)^\top\\
     &\times \big((\V{I}-\Bt)^{h-1} \mu_{\max}^{-1}\tilde{\V{S}}_{in} \V{Y}_{obs}\big)^\top ,
\end{align*}
by the product rule of differentiation, and we have
\begin{equation*}
	\nabla_{\V{I}-\Bt} (\V{T}_{r})_u = \sum_{h=1}^{r} \Big(\big((\V{I}-\Bt)^{r-h}\big)_{u,:}\Big)^\top(\V{T}_{h-1})^\top\enspace.
\end{equation*}
Using that $\V{I}-\Bt= \V{I}-\frac{1}{\mu_{\max}}(\tilde{\V{S}}_{in} + \lambda\tilde{\V{L}})$, we have
\begin{align}
	\nabla_{\tilde{\V{L}}} (\V{T}_{r})_u
	&=- \frac{\lambda}{\mu_{\max}} \nabla_{\V{I}-\Bt} (\V{T}_{r})_u \notag\\
	&= -\frac{\lambda}{\mu_{\max}} \sum_{h=1}^{r} \Big(\big((\V{I}-\Bt)^{r-h}\big)_{u,:}\Big)^\top(\V{T}_{h-1})^\top\enspace. \label{eq:Tr_gradient}
\end{align}
And finally, by deriving $\tilde{\V{L}}$ \wrt $\V{A}_{ij}$:
\begin{align}
	\frac{\partial (\V{T}_{r})_u}{\partial \V{A}_{ij}} = -\frac{\lambda}{|E|\mu_{\max}} \sum_{h=1}^{r} &\big((\V{I}-\Bt)^{r-h}\big)_{ui} (\V{T}_{h-1})_i \label{eq:partialVtr}\\
 &+ \big((\V{I}-\Bt)^{r-h}\big)_{uj} (\V{T}_{h-1})_j \notag \\
 &- \big((\V{I}-\Bt)^{r-h}\big)_{uj} (\V{T}_{h-1})_i \notag \\
 &- \big((\V{I}-\Bt)^{r-h}\big)_{ui} (\V{T}_{h-1})_j \notag\enspace,
 \end{align}
which allows us to prove the following.
\begin{lemma}\label{lem:partialVtr}
    Let $i,j,u$ such that: $i,j$ are at least $k$-hop from $u$, and at least ${q}$-hop from $V_{tr}$. Then:
    \begin{equation}
        \left|\frac{\partial (\V{T}_{r})_u}{\partial \V{A}_{ij}}\right| \leq \begin{cases}
        0 &\text{if ${q} + k>r$} \\
        \frac{4\lambda  y_\infty }{|E|\mu_{\max}^2\sqrt{|V_{tr}|}} (r-{q}-k) \nu^{r-1} &\text{otherwise.}
        \end{cases}
    \end{equation}
\end{lemma}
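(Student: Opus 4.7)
The plan is to combine the closed-form expression \eqref{eq:partialVtr} with two support observations that zero out most summands, and then to dominate the remaining ones using spectral bounds already at hand.

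First, I would isolate the two zero patterns at play. Since $\V{I}-\Bt = \V{I} - \tilde{\V{S}}_{in}/\mu_{\max} - \lambda(\V{D}-\V{A})/(|E|\mu_{\max})$ has the same off-diagonal support as $\V{A}$, the entry $\bigl((\V{I}-\Bt)^m\bigr)_{uv}$ vanishes whenever the graph distance from $u$ to $v$ exceeds $m$. Second, as noted just after the definition of $\V{T}_r$, $(\V{T}_{h-1})_v = 0$ whenever $v$ is more than $h-1$ hops from $V_{tr}$. Applied to \eqref{eq:partialVtr}: every summand contains a factor $\bigl((\V{I}-\Bt)^{r-h}\bigr)_{u,v}$ with $v\in\{i,j\}$, which forces $h\leq r-k$ because $i,j$ are at least $k$-hop from $u$, together with a factor $(\V{T}_{h-1})_v$ with $v\in\{i,j\}$, which forces $h\geq q+1$ because $i,j$ are at least $q$-hop from $V_{tr}$. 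Hence only indices $h\in\{q+1,\ldots,r-k\}$ survive, and this range is empty as soon as $q+k\geq r$. The strict case $q+k>r$ yields the first branch of the bound; the boundary case $q+k=r$ also produces an empty sum and is consistently absorbed by the factor $(r-q-k)$ in the second branch.

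For $q+k<r$, I would bound each retained term in absolute value. Using the elementary inequality $|M_{uv}|\leq \|M\|$ together with the symmetry of $\V{I}-\Bt$ and the spectral bound $\|\V{I}-\Bt\|\leq \nu$ implied by \cref{lemma:norm_strict_smaller_1}, we get $\bigl|\bigl((\V{I}-\Bt)^{r-h}\bigr)_{uv}\bigr|\leq \nu^{r-h}$. Combining with the previously established $\|\V{T}_{h-1}\|\leq \nu^{h-1} y_\infty/(\mu_{\max}\sqrt{|V_{tr}|})$, each of the four bracketed terms is majorized by $\nu^{r-1} y_\infty/(\mu_{\max}\sqrt{|V_{tr}|})$, which is independent of $h$. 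Summing over the $r-q-k$ valid indices, multiplying by the prefactor $\lambda/(|E|\mu_{\max})$, and applying the triangle inequality to the four terms yields exactly the announced bound.

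I do not anticipate a real obstacle: once the two support conditions are articulated, the rest is bookkeeping. The only subtle point is the joint handling of the two hop constraints and the boundary $q+k=r$, which is automatically consistent with the advertised estimate thanks to the linear prefactor $(r-q-k)$.
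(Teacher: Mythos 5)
Your proposal is correct and follows essentially the same route as the paper's proof: identify the two support conditions that restrict the sum in \eqref{eq:partialVtr} to $h\in\{q+1,\ldots,r-k\}$, bound each surviving term by $\nu^{r-1}y_\infty/(\mu_{\max}\sqrt{|V_{tr}|})$ via the spectral bound on $\V{I}-\Bt$ and the norm bound on $\V{T}_{h-1}$, and count the $r-q-k$ terms. Your explicit treatment of the boundary case $q+k=r$ is a minor clarification the paper handles only implicitly.
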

\begin{proof}
    Recall that $(\V{T}_r)_u=0$ if $u$ is more than $r$-hop from $V_{tr}$. Similarly, $((\V{I}-\Bt)^r)_{ui}=0$ if $u$ and $i$ are more than $r$-hop from each other. Hence, the term $\big((\V{I}-\Bt)^{r-h}\big)_{ui} (\V{T}_{h-1})_i$ appearing in \eqref{eq:partialVtr} is $0$ if $r-h<k$ or $h-1<{q}$, and bounded by $(\mu_{\max} \sqrt{|V_{tr}|})^{-1}\nu^{r-1} y_\infty$ otherwise. Similarly for the other terms, so the sum in \eqref{eq:partialVtr} runs over the indices $h$ that satisfy $q+1 \leq h \leq r-k$, which is either none if ${q+1} + k>r$, or $r-{q}-k$ terms otherwise, which concludes the proof.
\end{proof}

\subsection{Proof of \cref{theorem:laplacian}}\label{sec:proof_laplacian}

We finally examine the hypergradient, and prove an exponential damping rate of its magnitude with the cumulative distance to $V_{tr}$ and $V_{out}$ (the sum of both distances).
Considering $F_{out} = \|\V{S}_{out}(\V{Y}(\V{A}) - \V{Y}_{obs})\|^2$, where $\V{S}_{out}$ is the diagonal selection matrix whose diagonal entries equal $1$ if the corresponding node is in $V_{out}$ and $0$ otherwise, we have:
\begin{align*}
    \frac{\partial F_{out}}{\partial \V{A}_{ij}} &= 2 (\frac{\partial\V{Y}(\V{A})}{\partial\V{A}_{ij}})^\top \V{S}_{out}(\V{Y}(\V{A}) - \V{Y}_{obs}) \\
    &= 2\sum_{r=0}^\infty (\frac{\partial\V{T}_r}{\partial\V{A}_{ij}})^\top \V{S}_{out}(\V{Y}(\V{A}) - \V{Y}_{obs})\enspace.
\end{align*}
Using a triangular inequality, the bound on $\|\V{Y}(\V{A})\|$, and that $\|\V{S}_{out}\V{Y}_{obs}\| \leq \sqrt{|V_{out}|}y_\infty$ we get:
\begin{equation*}
    \|\V{S}_{out}(\V{Y}(\V{A}) - \V{Y}_{obs})\|\leq  \frac{1+\mu_{\min}\sqrt{|V_{tr}||V_{out}|}}{\mu_{\min}\sqrt{|V_{tr}|}}  y_\infty\enspace.
\end{equation*}
By incorporating the resulting inequality in bounding the hypergradient, and by noticing that $\V{S}_{out} = \V{S}_{out}^2$ we have:
\begin{align*}
    &\left|\frac{\partial F_{out}}{\partial \V{A}_{ij}}\right| \lesssim \frac{1+\mu_{\min}\sqrt{|V_{tr}||V_{out}|}}{\mu_{\min}\sqrt{|V_{tr}|}}  y_\infty \sum_{r=0}^\infty \|\V{S}_{out}\frac{\partial \V{T}_{r}}{\partial \V{A}_{ij}}\|\\
    &\lesssim \frac{1+\mu_{\min}\sqrt{|V_{tr}||V_{out}|}}{\mu_{\min}\sqrt{|V_{tr}|}}  y_\infty \sum_{r=0}^\infty \left(\sum_{u \in V_{out}} \left|\frac{\partial (\V{T}_r)_u}{\partial \V{A}_{ij}}\right|^2\right)^\frac12 \enspace.
\end{align*}
Using Lemma \ref{lem:partialVtr} and the hypotheses on $i$ and $j$, for $u$ in $V_{out}$, the term $\left|\frac{\partial (\V{T}_r)_u}{\partial \V{A}_{ij}}\right|$ is $0$ if $r < {q}+k+1$, and bounded by $\frac{4\lambda  y_\infty }{|E|\mu_{\max}^2\sqrt{|V_{tr}|}} (r-{q}-k) \nu^{r-1}$ otherwise. Hence:
\begin{align*}
    \left|\frac{\partial F_{out}}{\partial \V{A}_{ij}}\right|
    \lesssim &\lambda\frac{\sqrt{|V_{out}|}+\mu_{\min}\sqrt{|V_{tr}|}|V_{out}|}{\mu_{\min}|V_{tr}||E|\mu_{\max}^2}  y_\infty^2\\ &\times \sum_{r={q}+k+1}^\infty (r-{q}-k) \nu^{r-1} \enspace .
\end{align*}    
Then we see that for $\nu < 1$ we have
\begin{align*}
    \sum_{r={q}+k+1}^\infty (r-{q}-k) \nu^{r-1} &= \nu^{{q}+k}\sum_{r=1}^\infty r \nu^{r-1} \enspace,
\end{align*}
and $\sum_{r=1}^\infty r \nu^{r-1} = \frac{1}{(1-\nu)^2} = \frac{1}{\mu^2}$, which concludes the proof.

\section{Alleviating hypergradient scarcity}\label{sec:solution}
In this section, we review strategies to mitigate the hypergradient scarcity problem.
However, it is important that we make a distinction between resolving this issue and resolving the overfitting problem.
Indeed, if gradient scarcity is also caused by the limited quantity of available labelled data, it is important to avoid confusion with traditional overfitting.
In particular, while traditional overfitting is generally reduced by adding more training data, \emph{when optimizing edges far from labelled nodes gradient scarcity is observed regardless of the dataset size and the number of labels.}
We study several strategies to mitigate hypergradient scarcity in the bilevel setting, but we emphasize that they might not lead to a better generalization error altogether.

\textbf{Generalized edge refinement by optimizing $\V{A}_{obs}^r$.}
As hypergradient scarcity is observed on edges connecting nodes distant from the labelled ones, a natural fix is to reduce this distance.
One way to do that is by refining edges in a power of $\V{A}_{obs}$, as the matrix $\V{A}_{obs}^r$ includes $r$-edge long connections between nodes.
In our experiments we adopt $\V{A}_{obs}^6$ as this notably expands the graph but does not achieve the extreme case where the result is a complete graph.

\textbf{Graph regularization.}
Graph regularization is used to impose a prior structure on the learned graph, by adding a regularization term to $F_{out}$ to penalize graphs with undesirable properties.
For instance, \cite{kalofolias2016learn} proposes the regularization term
$-\gamma\V{1}_n^\top \log \V{A}\V{1}_n$ for some $\gamma>0$, to penalize low-degree nodes.
We use this choice in the experiments, but note that imposing task-related priors and regularization terms could lead to better performance. This will be the topic of future work.

\textbf{\GtoG for edge refinement.}
The third fix we suggest is latent graph learning using \GtoG models.
In the outer problem, we propose to replace optimizing edge weights by optimizing the parameters of a \GtoG model to predict similarity between nodes.
Let $\theta$ be the weights of this model, and  $\V{A}_\theta$ be its output graph, the \GtoG model we adopt is $(\V{A}_\theta)_{i,j} = \alpha\big(({\V{X}}_i-  {\V{X}}_j)^2\big)$,
where the square function is applied entrywise, $\alpha: \bbR^{p}  \to \bbR$ is a  Multi-Layer Perceptron (\MLP) model consisting of $k_{\GtoG}$ layers, each is of the form:
\begin{equation*}
  \V{X}^{[l]} = \phi^{[l]}(\V{X}^{[l-1]} \V{W}_1^{[l]} +  \V{1}_n (\V{b}^{[l]})^\top) \enspace,
\end{equation*}
where $\V{W}_1^{[l]} \in \mathbb{R}^{d_{l-1} \times d_{l}}, \V{b}^{[l]} \in \mathbb{R}^{d_{l}}$ are learnable parameters, and $d_l$ is the output dimensionality of the $l$-th layer. The parameters are gathered as $\theta = \{ \V{W}_1^{[l]}, \V{b}^{[l]}\}_{l=1}^{k_{\GtoG}}$.

\section{Experiments}

We\footnote{Our \textit{Python} implementation is available at \texttt{\url{https://github.com/hashemghanem/Gradients_scarcity_graph_learning}}.} use two synthetic datasets, the first one, called synthetic dataset 1, is designed to test the Laplacian regularization.
The second one is a binary classification dataset that can be used for both graph-based models. Due to the paradigm behind construction, we call it the cheaters dataset. We also illustrate our findings on the real-world Cora dataset.

\textbf{Bilevel optimization:}
The problem~\cref{eq:bilevel_problem_learn_A} is intractable as neither the solution of the
inner problem nor its gradient \wrt $\V{A}$ (or to $\theta$ with \GtoG models) has a closed form expression that can be evaluated.
To overcome this difficulty, we unroll~\cite{gregor2010learning} $\tau_{in}$ iterations of the gradient-based inner optimizer, then using the \texttt{Higher} package \cite{grefenstette2019generalized} to trace iterations and perform higher-order Automatic Differentiation (AD) to compute the hypergradient.
For both the inner and the outer optimizers, we consider the Adam algorithm~\cite{KingmaAdam}.

\textbf{Synthetic dataset 1:}
we sample \emph{i.i.d.} latent variables $\V{X}\in \bbR^{n\times p}$ for nodes uniformly at random from $[0, 1]$ with $n =1536, p=2$.
The ground-truth graph $\V{A}^\star$ is constructed \st $(\V{A}^\star)_{i,j} = 1$ if $\| \V{X}_i- \V{X}_j\|_2<\sigma$, and $0$ otherwise.
$\sigma$ is set to  $0.06$ in our experiments. 
Two distinct procedures were employed to sample the nodes that comprise $V_{tr}$, leading to two distinct realizations of the dataset as illustrated in \cref{fig:Laplace_scarcity_bilevel}(top).
The first procedure randomly samples $100$ nodes from the set $V$, hence $V_{tr}$ is well-spread, whereas
the second procedure selects the $100$ nodes with the smallest Euclidean distance to the point $(0.5, 0.5)$, thus $V_{tr}$ is concentrated in a small neighborhood in this case.
In both cases, we randomly sample $25$ nodes from $V$ to construct $V_{out}$.
The remaining nodes are equally divided between the validation and the test sets.
Then, each node $i$ in  $V_{tr}$ is labeled as follows:
\begin{equation*}
    ({\V{Y}_{obs}})_i = \zeta(e^{-\frac{(\V{X}_i-{\V{a}}_1)^2}{2(0.2)^2}}+
    e^{\frac{-(\V{X}_i-{\V{a}}_2)^2}{2(0.2)^2}}+
    e^{\frac{-(\V{X}_i-{\V{a}}_3)^2}{2(0.2)^2}})\enspace,
\end{equation*}
where ${\V{a}}_1, {\V{a}}_2, {\V{a}}_3$ are randomly sampled from $[0,1]^2$, and $\zeta$ is a scaling factor such that labels lie in $[0,1]$.
By this construction, the prior that the labelling function on the graph is smooth is met, and the Laplacian regularization can be applied as in \cref{eq:inner-problem-laplace}.

To generate labels for other nodes, we plug the labels of $V_{tr}$ and $\V{A}^\star$ in \cref{eq:inner-problem-laplace} with $\lambda = 1$, such that the solution holds the sought-for labels.
This way, the ground-truth graph actually plays a role in labelling nodes in $V_{out}$ and $V$.

The noisy observed graph is built upon random weights
\begin{equation*}
  (\V{A}_{obs})_{i,j} = \xi_{i,j} (\V{A}^\star)_{i,j}
  \quad \text{where} \quad
  \xi_{i,j} \sim \calU([0,1])  \enspace.
\end{equation*}
Experiments on this dataset are done with the Laplacian regularization in the inner problem  as in \cref{eq:inner-problem-laplace}.
\begin{figure}[t]
	\centering
	  \includegraphics[width=.24\textwidth]{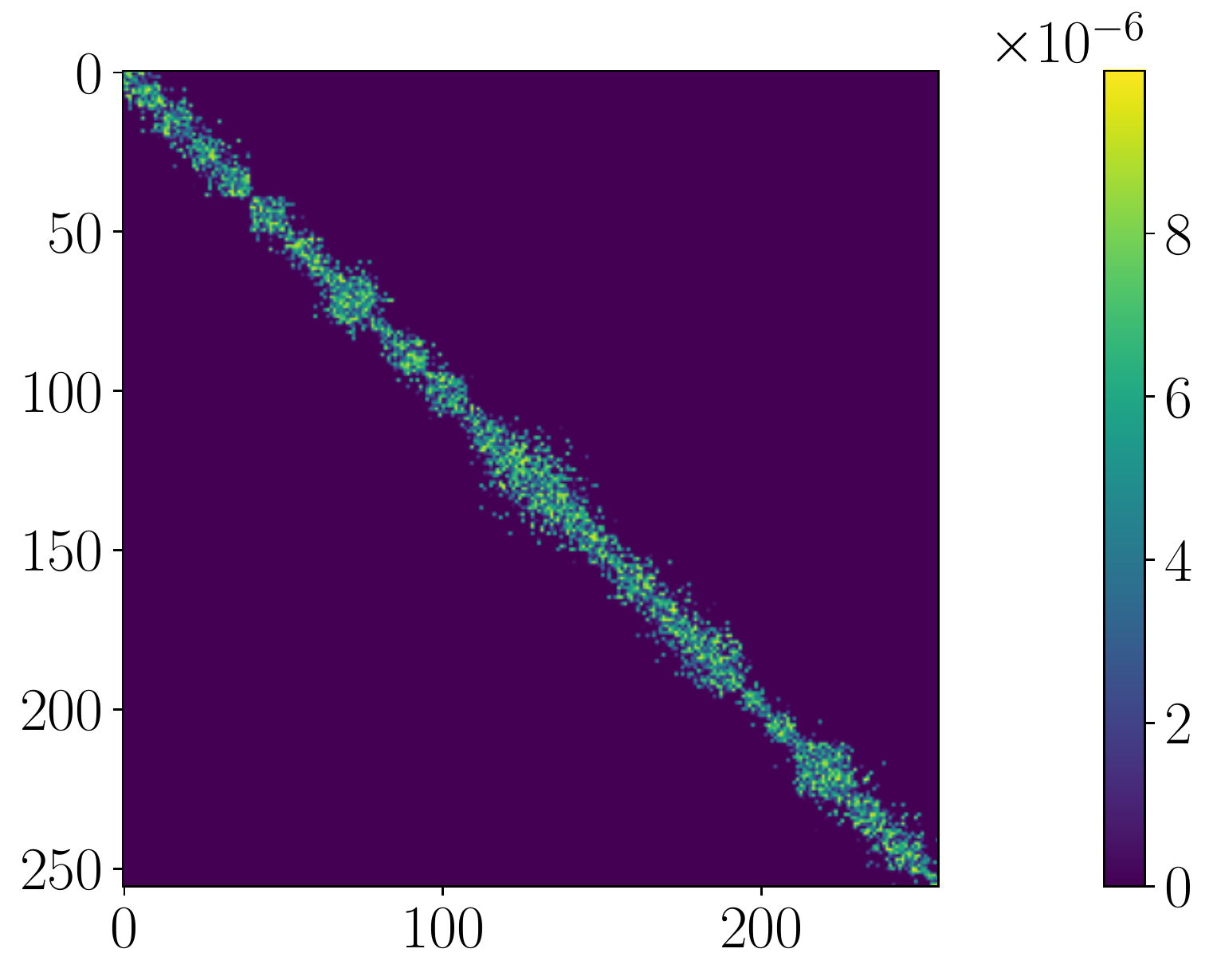}
	  \includegraphics[width=.24\textwidth]{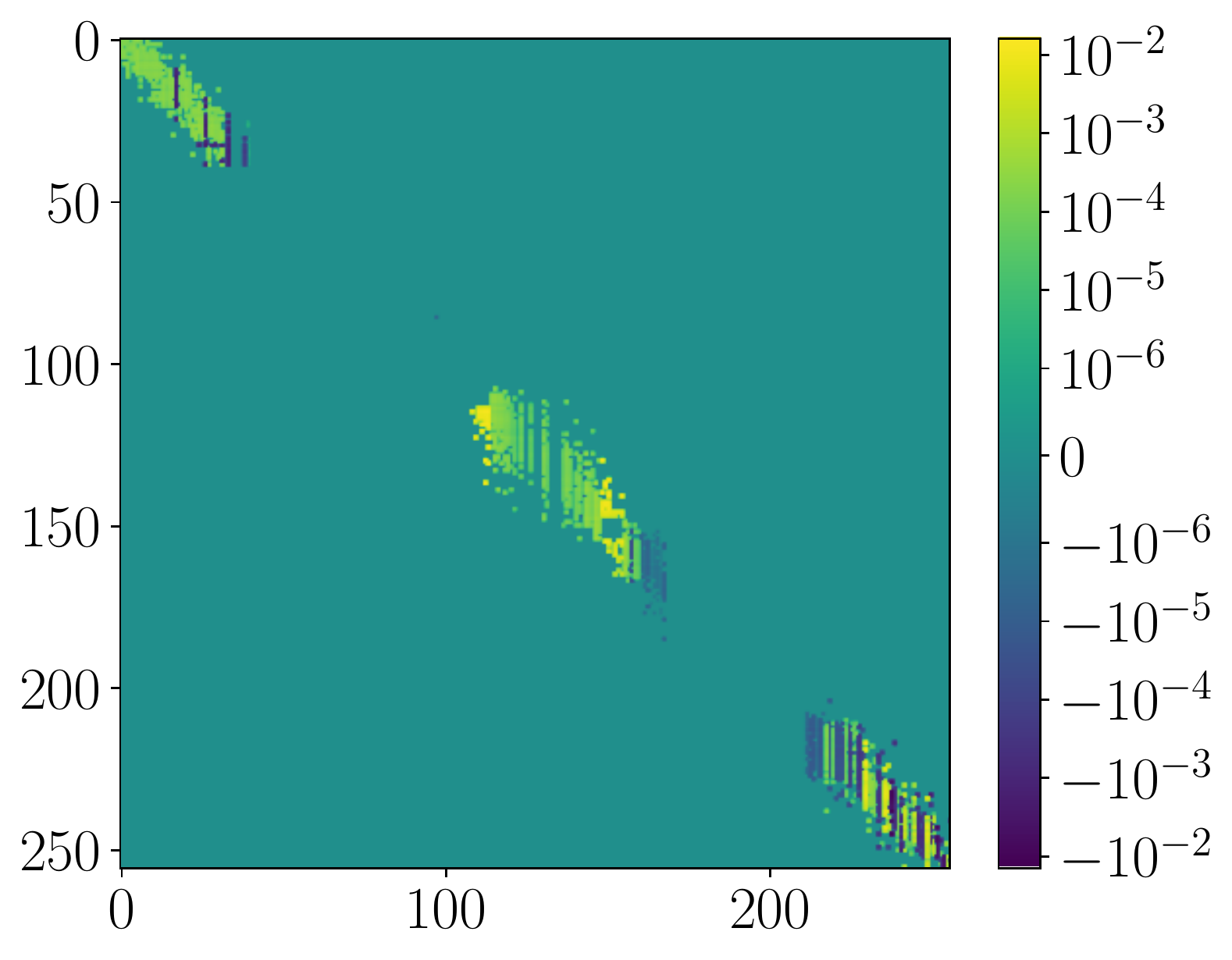}
	  \caption{Hypergradient scarcity observed when solving the edge refinement task with the bilevel optimization framework. We run the experiment on the cheaters dataset, and use a $2$-layer \GCN as a classifier.
	  Left: graph initialization. Right: hypergradient at an arbitrary outer iteration, namely $9$.  It is clear that the hypergradient on edges between unlabelled nodes far from the ones in $V_{out}\cup V_{tr}$ equals zero. Recall that $V_{tr} = \{0,1,\ldots, 32\} \cup  \{224,\ldots, 255\}$ and  $V_{out} = \{96,\dots, 160\}$.}
	  \label{fig:gcn_scarcity_bilevel}
\end{figure}

\textbf{Cheaters dataset:}
nodes in this graph represent students in an exam classroom.
Setting $n=256, p=10$, the \emph{i.i.d.} features $\V{X}\in \bbR^{256\times 10}$ are sampled uniformly at random from $[0, 1]$.
For a node $i$, $\V{X}_{i,0}$ represents the position of the according student in the classroom.
For visualization purposes we enumerate nodes following the ascending order of $\V{X}_{:,0}$.
The remaining $9$ features of a student represent the grades he is capable of scoring in the corresponding exam question.
However, students tend to cheat with their neighbors in the graph.
The ground-truth graph $\V{A}^\star$ is constructed as follows:
\begin{equation*}
    (\V{A}^\star)_{i,j} = \exp{(-\| \V{X}_{i,0}- \V{X}_{j,0}\|_2^2/2\sigma^2)} \enspace.
\end{equation*}
The observed graph $\V{A}_{obs}$ is drawn from a random  model as
\begin{equation*}
    (\V{A}_{obs})_{i,j}\sim  \mathop{\text{Ber}} \left( (\V{A}^\star)_{i,j}\right).
\end{equation*}
We set $\sigma = 0.027$ \st the number of edges in $\V{A}_{obs}$ approximates $n\log n$.
Students cheat such that their grades $\V{Y}_{grade}$ after the exam are
\begin{equation*}
    \V{Y}_{grade} =  \V{A}^\star \V{X}_{:,1:9} \V{1}_9\enspace.
\end{equation*}
A student passes the exam if his grade is greater than a threshold $\tau$, \ie $({\V{Y}_{obs}})_i=1$ if $(\V{Y}_{grade})_i>\tau$ and $0$  otherwise.
We put $\tau = 60$ so that approximately half of students pass the exam.
$V_{tr}$ includes nodes in  $\{0,1,\ldots, n/8\} \cup  \{7n/8,\ldots, n-1\}$, \ie near the two ends of the $1$-dimensional class. $V_{out} = \{3n/8,\dots, 5n/8\}$, \ie centered around the middle of the class. Remaining nodes are equally divided into a validation and a test set.
Experiments on this dataset are done with a \GCN classifier.

\textbf{Real-world dataset:}
we validate our findings on the Cora dataset \cite{Lu2003LinkbasedC}.
Cora is a citation datasets, where nodes represent research publications described by a bag of words, and edges stand for citations.
The task is to classify articles \wrt their topic.
In this work, we limit our experimentation on real-world datasets to the Cora dataset, as our empirical results are intended to establish a proof-of-concept. Therefore, we refrain from conducting experiments on other benchmark datasets.
\begin{figure*} 
	\centering
        \subfloat[]{   
	\includegraphics[width=.30\textwidth]{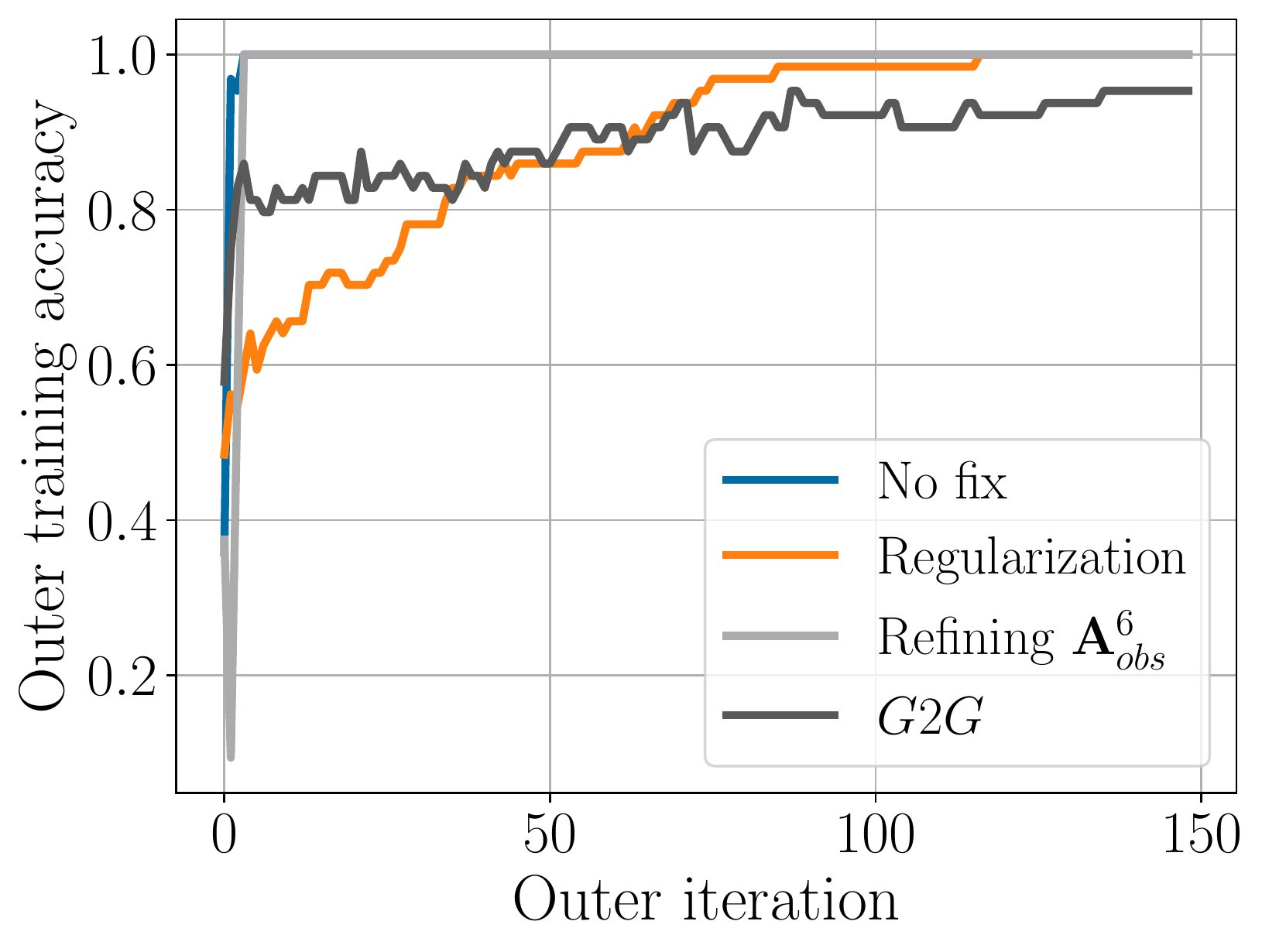}
	}
	\subfloat[]{
		\includegraphics[width=.30\textwidth]{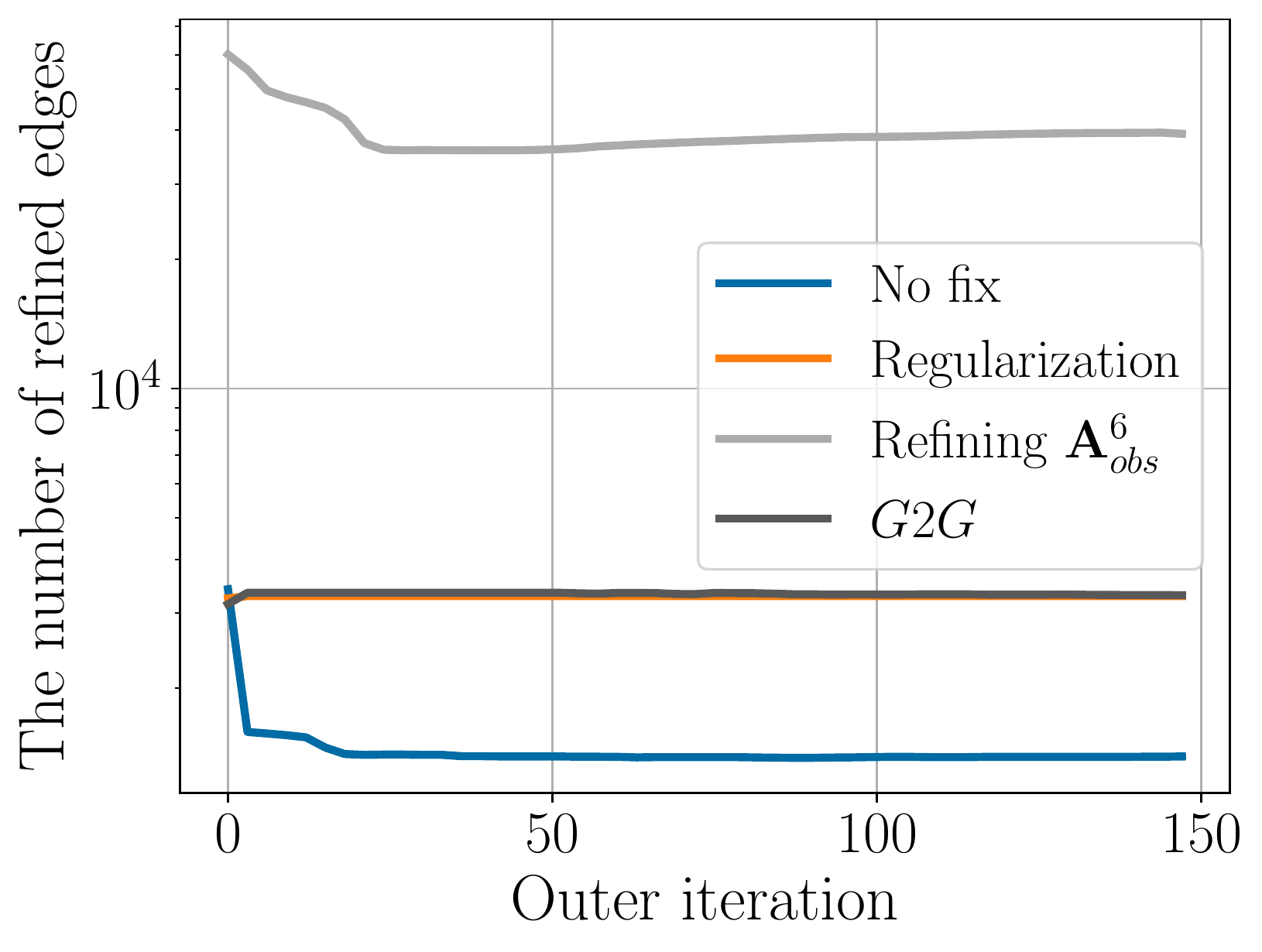}
	}
	\subfloat[]{
		\includegraphics[width=.30\textwidth]{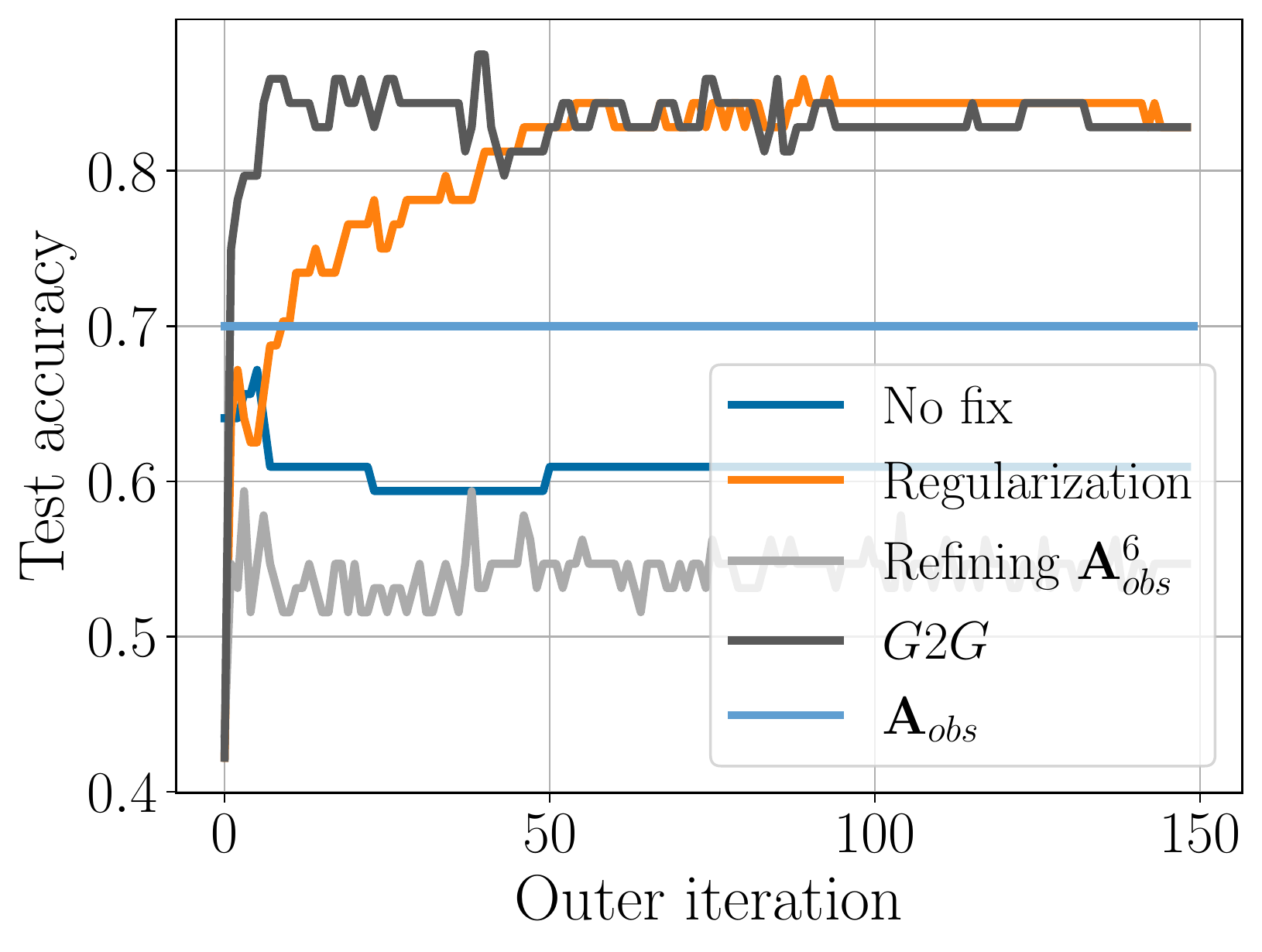}
	}
	\caption{Efficiency of proposed solutions to hypergradient scarcity \wrt the number of refined edges and the generalization capacity. An edge is considered well refined if its learned magnitude is larger than one percent of the maximum learned edge weight. The solutions are graph regularization with  $-\V{1}^\top \log \V{A}\V{1}$, latent graph learning using a \GtoG model, and generalized edge refinement by refining edges in $\V{A}_{obs}^6$. (a): training accuracy on $V_{out}$. (b): number of refined edges. (c) test accuracy.}
\label{fig:propsed_solutions}
\end{figure*}

{\textbf{Models:}}
 \GtoG and \GCN models are implemented using \textit{PyTorch} \cite{NEURIPS2019_9015} and \textit{PyTorch} Geometric \cite{Fey/Lenssen/2019}, respectively.
The function $\alpha$ in the \GtoG model is an \MLP with $2$ hidden layers, each is followed by the $ReLu$ activation function and has $16$ neurons for the cheaters dataset and $32$ neurons for Cora.
The \GCN has $1$ hidden layer of $8$ neurons for the cheaters dataset and $128$ for Cora. This layer is followed $ReLu$, while the output is followed by the  $softmax$ function.
\begin{figure}[t]
	\centering
   \includegraphics[width=.24\textwidth]{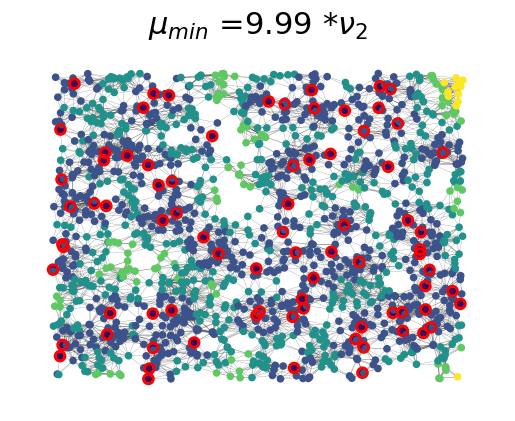}
   \includegraphics[width=.24\textwidth]{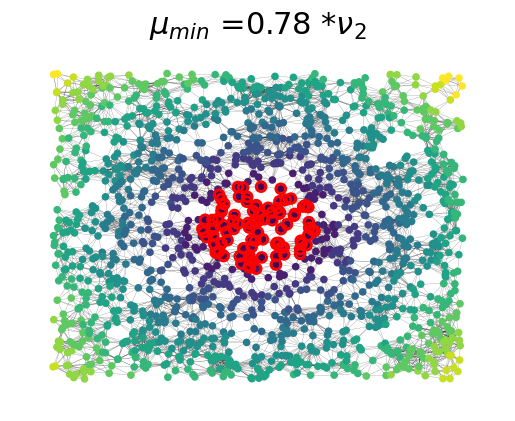}\\
   \includegraphics[width=.24\textwidth]{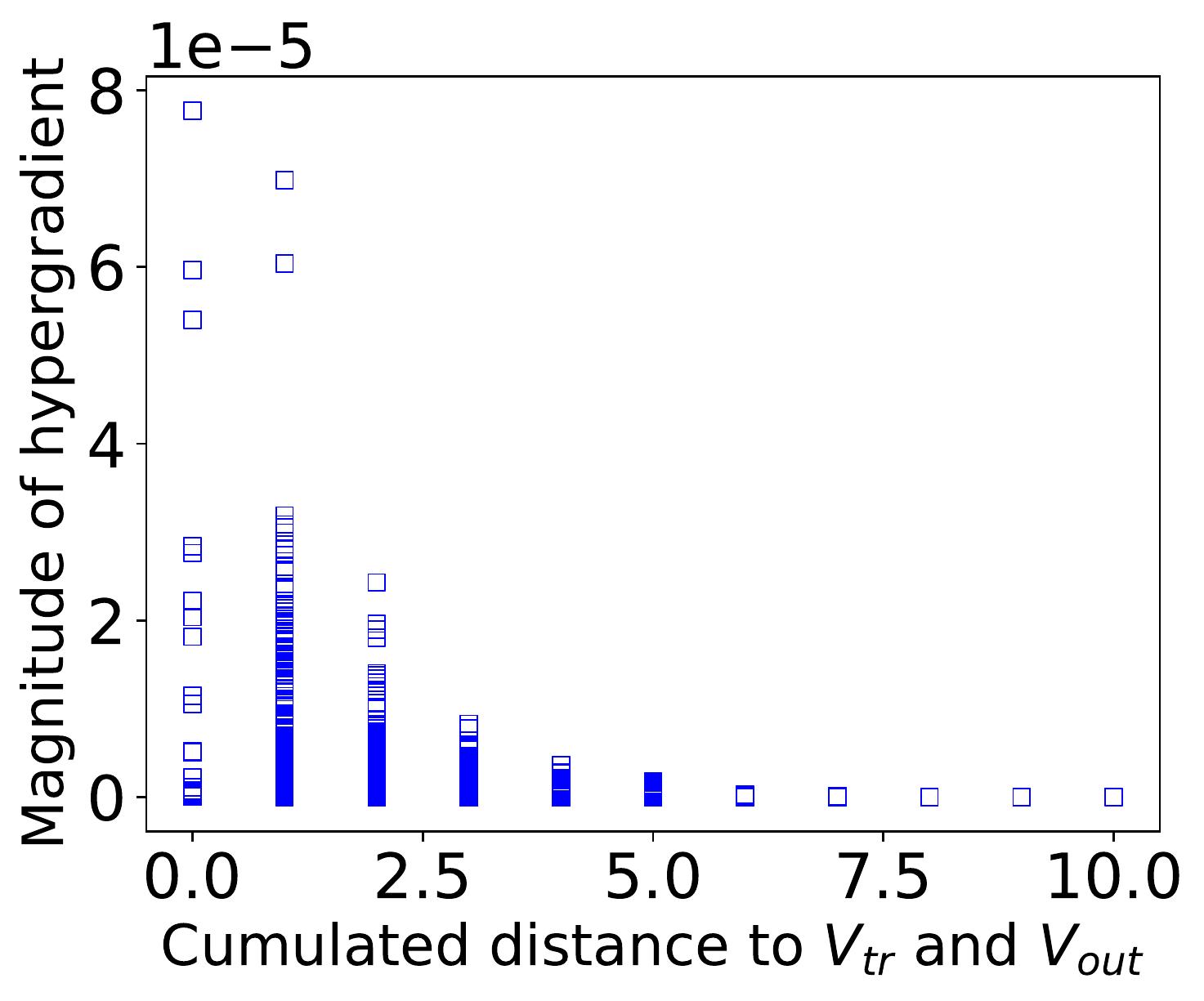}
   \includegraphics[width=.24\textwidth]{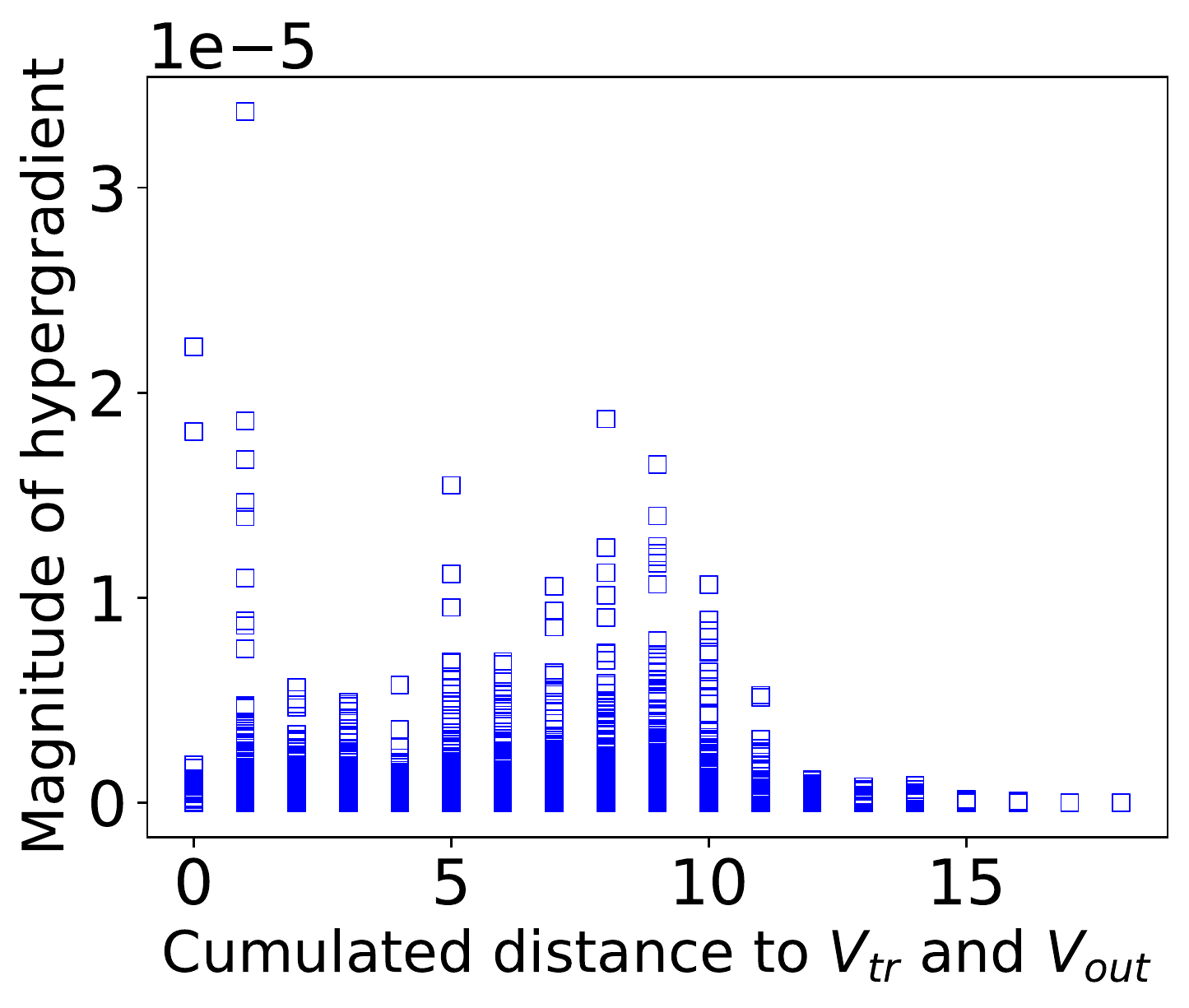}
	  \caption{hypergradient scarcity under the bilevel optimization setting on the synthetic dataset 1, adopting the Laplacian regularization in the inner problem.
   \textbf{Top}: illustration of the graph. The training nodes $V_{tr}$ are circled in red, the colors correspond to the distance to $V_{tr}$. The eigenvalue $\mu_{\min}$ is given as a ratio of the smallest positive eigenvalue of $\tilde{\V{L}}$.
   $V_{out}$ is randomly sampled from $V$ but not shown here.
   \textbf{Bottom}: Hypergradient magnitude $\left|\frac{\partial F_{out}}{\partial \V{A}_{ij}}\right|$ with respect to the \emph{sum} of distances to $V_{tr}$ and $V_{out}$.
   \textbf{Left}: the training set $V_{tr}$ is well-spread thereby aligned with the high-frequency eigenvectors of the graph, resulting in a \emph{high} $\mu_{\min}$. The decrease of the hypergradients is sharp with the distance.
   \textbf{Right}: $V_{tr}$ is aligned with the low-frequency eigenvectors of the graph, resulting in a \emph{low} $\mu_{\min}$. The decrease of hypergradients magnitude is not as sharp as the previous case.}
\label{fig:Laplace_scarcity_bilevel}
\end{figure}

{\textbf{Setup:}}
we use Adam as the inner and the outer optimizer with the default parameters of \textit{PyTorch}, except for the inner learning rate $\eta _{in}$ and the outer one $\eta _{out}$, which are tuned from the set $\{10^{-4}, 10^{-3}, \ldots, 10\}$. The best values were
$\eta_{in} =10^{-2}$ with \GCNs as a classifier, $\eta_{in} =10^{-1}$ and $\eta_{in} =10$ with the Laplacian regularization on Cora and on the synthetic dataset 1, respectively.
On the cheaters dataset, $\eta _{out} = 10^{-3}$ adopting a \GtoG model, while $\eta _{out} = 10^{-2}$ in other cases.
On the synthetic dataset 1, $\eta _{out} =10^{-1}$.
On Cora, $\eta _{out} = 10^{-2}$ in all experiments without a \GtoG model, Otherwise $\eta _{out} = 10^{-4}$ adopting the \GCN classifier, and $\eta _{out} = 10^{-3}$ adopting the Laplacian regularization.
We set, with a grid search, $\tau_{in}$ to $200$ for the cheaters dataset, $500$ for the synthetic dataset 1 and Cora adopting the Laplacian regularization, and $100$ for Cora with a \GCN classifier.
In experiments on the cheaters dataset, we multiply the default initialization of the last layer of the  \GtoG  model by $10^{-5}$ \st its output edges at the first iteration are of small magnitude. We adopt this strategy to measure the level of scarcity by counting the number of learned edges of magnitude greater than a chosen threshold.
\GCN weights $W$ and the initialization of labels when using the Laplacian regularization are initialized at random after each outer iteration, using Xavier initialization and uniformly at random from $[0,1]$, respectively.
Edges to be refined are initialized uniformly at random from $[0,1]$, except for experiments on the cheaters dataset where the interval becomes $10^{-5}*[0,1]$.
We set the number of outer iterations $\tau_{out}$ to $150$ while ensuring convergence, and we select the graph (or the \GtoG weights) with the highest validation accuracy.
We set $\lambda=1$ in training when considering the Laplacian regularization, as we expect the bilevel algorithm to learn this parameter by scaling the learned adjacency matrix.
When applying the Laplacian regularization fed with $\V{A}_{obs}$ on Cora, we set $\lambda=0.1$ after a grid search.
$\gamma$ in the graph regularization term is set to $1$ following a grid search on the set $\{10^{-3}, 10^{-2}, \ldots, 10\}$.

\subsection{Hypergradient scarcity with \GCN classifiers}
In this experiment, we consider a $2$-layer \GCN classifier in the bilevel framework.
We solve the edge refinement task 
\eqref{eq:bilevel_problem_learn_A} on the cheaters dataset, where $\ell$ in \cref{eq:inner-problem-laplace,eq:bilevel_problem_learn_A} is the CCE function.
\cref{fig:gcn_scarcity_bilevel}(left) depicts the initialization of the adjacency matrix.
It also shows what edges are to be optimized, that is, edges whose initialization is nonzero.
In \cref{fig:gcn_scarcity_bilevel}(right), we show the hypergradient at the outer iteration $9$, which is arbitrarily chosen, where it is clear that edges between unlabelled nodes far from the ones in the union $V_{out}\cup V_{tr}$ get no supervision during the training process.
Recall that $V_{tr} = \{0,1,\ldots, 32\} \cup  \{224,\ldots, 255\}$ and  $V_{out} = \{96,\dots, 160\}$.
This aligns with our findings, which state that edges between nodes at least $2$-hop from nodes in $V_{out}\cup V_{tr}$ receive zero hypergradients.
This, as seen in \cref{fig:propsed_solutions}, leads to a learned graph that overfits training nodes and even generalizes worse than $\V{A}_{obs}$.

\subsection{Hypergradient scarcity with the Laplacian regularization}\label{sec:exp_synthetic1}
We here examine hypergradient scarcity when adopting the Laplacian regularization in the inner problem.
We run the bilevel optimizer to solve the edge refinement task on the synthetic dataset 1.
The dataset corresponds to a regression problem, so $\ell$ in \cref{eq:inner-problem-laplace,eq:bilevel_problem_learn_A} is the MSE loss function.

In \cref{fig:Laplace_scarcity_bilevel}(bottom), we plot the absolute value of hypergradients at the outer iteration $6$ as a function of the edge cumulative distance to $V_{tr}$ and $V_{out}$, which is defined as follows: we compute $q+k$, the sum of distances to $V_{tr}$ and $V_{out}$, respectively, for its both endpoint nodes, then we take the minimum of the two results.
One observes the hypergradient scarcity phenomenon, since hypergradients decay exponentially as the edge distance increases.
This validates our analysis articulated in \cref{theorem:laplacian}.
In addition, we observe in practice that $\mu$ is nevertheless quite small, and that our bound in \cref{theorem:laplacian} is quite loose.
Another observation is that the decrease rate is higher when $V_{tr}$ is well-spread in the graph.
Deriving a tighter bound on the magnitude of hypergradients and investigating the link between the distribution of labelled nodes and this bound will be the subject of a future work.

\subsection{Testing solutions to mitigate hypergradient scarcity}
We run our experiments on the cheaters dataset using the $2$-layer \GCN as a classifier.
In each experiment, we run our bilevel optimization framework  with one of the suggested fixes.
We consider two criteria to measure the efficiency of each solution, the first one is counting the number of refined edges.
At any outer iteration, we say that an edge is refined if its learned weight is greater that one percent of the maximum learned edge weight at the same iteration.
Recall that we initialize the graph/GtoG with small weights ($\approx 10^{-5}$).
The second criterion is the validation accuracy.
The first criterion assesses the ability to alleviate hypergradient scarcity, while the second assesses the generalization to unseen nodes during training, and thus if the learned graph is meaningful.

\cref{fig:propsed_solutions} shows that all three fixes produce better results \wrt the first criterion, as the number of refined edges is larger at almost every iteration, with optimizing edges in $\V{A}_{obs}^6$ being the most efficient, and the \GtoG model and graph regularization having a similar performance.
Moreover, one notices that this number decreases with the iteration when refining edges (without fix) in $\V{A}_{obs}^6$, which is expected as only a small portion of edges receive supervision; however this portion is larger when refining $\V{A}_{obs}^6$.

Regarding the second criterion,  the \GtoG model and the graph regularization generalize well, as both combat hypergradient scarcity without increasing (or even by decreasing) the number of parameters to learn.
On the other hand, optimizing edges in $\V{A}_{obs}^6$ deteriorates performance in the test phase.
A likely explanation is that by expanding the graph, we increase the number of parameters to learn, which means a more complex model that is more likely to overfit training nodes.
This experiment illustrates that
\textbf{hypergradient scarcity is not the traditional overfitting} related to data/label scarcity, and resolving it does not necessarily promote better generalization.
\begin{figure}
	\centering
   \includegraphics[width=.24\textwidth]{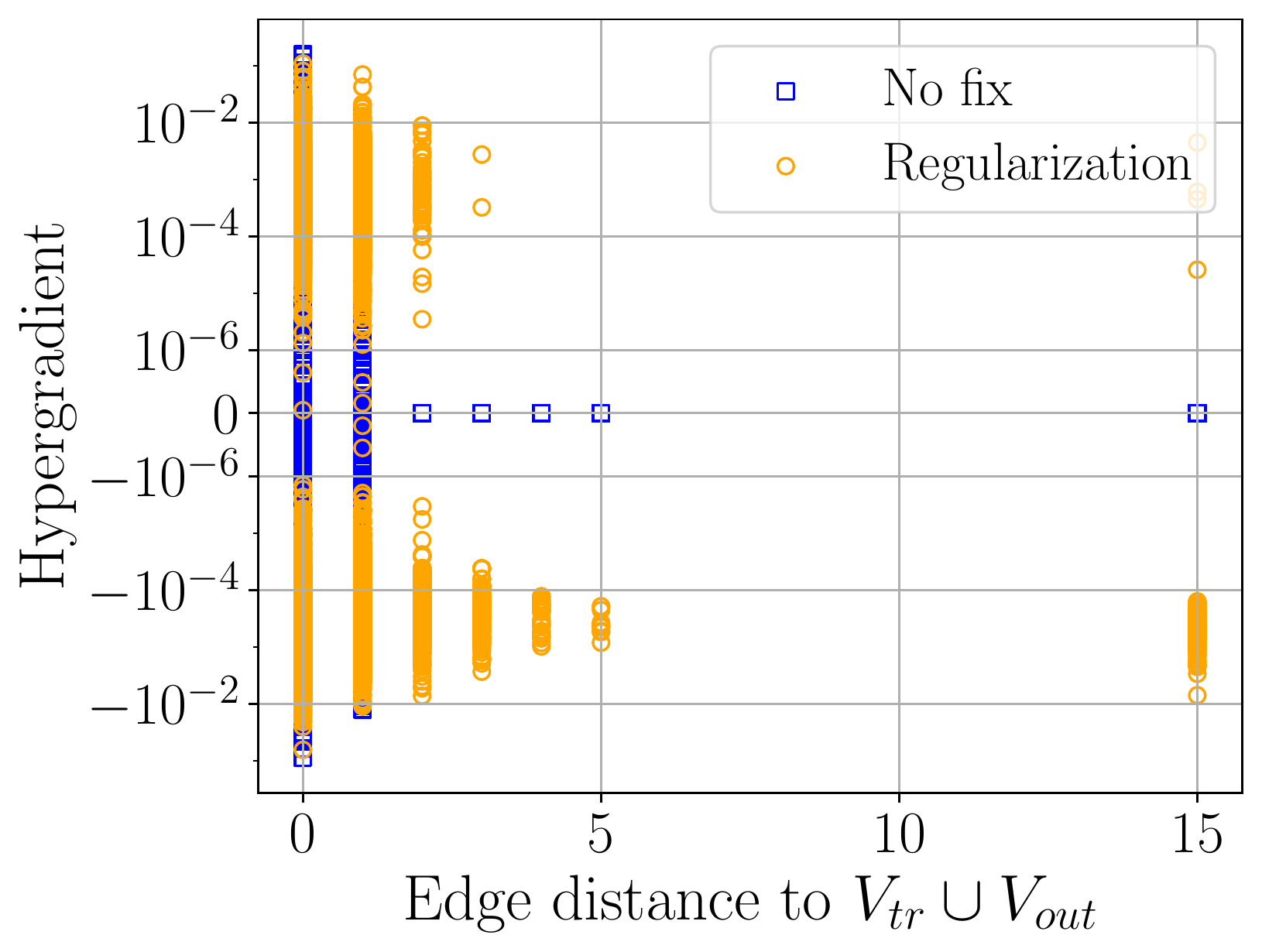}
   \includegraphics[width=.24\textwidth]{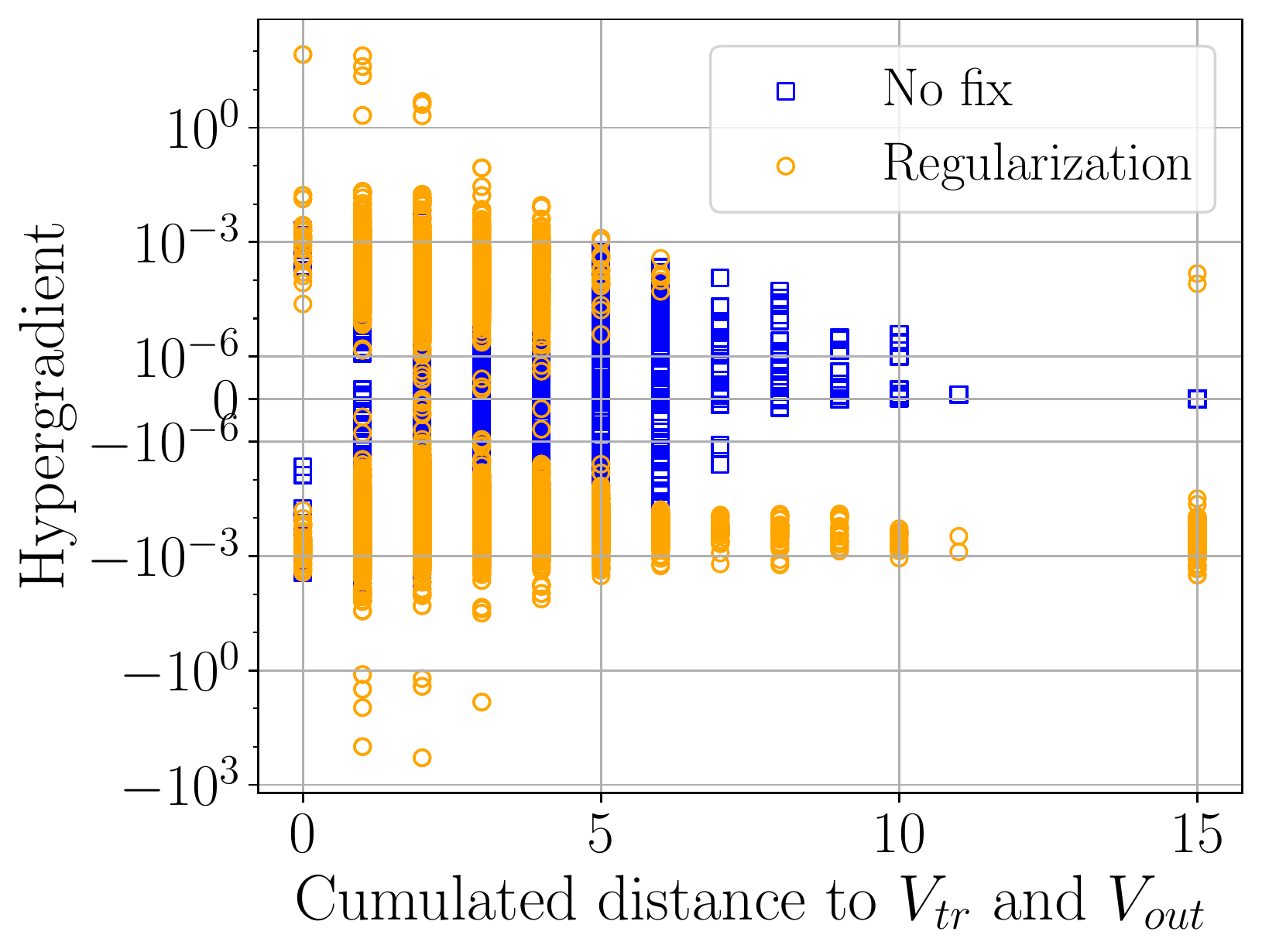}
	  \caption{Observing hypergradient scarcity and the effect of graph regularization on Cora.
      Left: adopting the \GCN classifier. Right: adopting the Laplacian regularization model.
   We plot the hypergradient against edge distance.
   In connected components without at least a node from each of $V_{tr}$ and $V_{out}$ in the Laplacian regularization case (or without a node from $V_{tr} \cup V_{out}$ in the \GCN case), edge distance is not defined. We assign the distance $15$ to edges in such components for visualization purpose.}
	  \label{fig:Cora_gradient_scarcity}
\end{figure}
\begin{table}
  \caption{Accuracies obtained on Cora when the classifier is trained using the output graph of the Bilevel Optimization (BO) framework, the same framework equipped with graph regularization, the same framework optimizing a \GtoG model. We also benchmark against GAM (the result is reported from the according paper) and against $\V{A}_{obs}$.
  For each classifier, we report test accuracy in the according first line and training accuracy on $V_{out}$ in the second one. Training accuracy on $V_{tr}$ equals $100\%$ for all methods.}
  \label{table:Cora_accuracy}
  \centering
  \begin{tabular}{lccccc}
    \toprule
     Graph & $\V{A}_{obs}$& BO& BO+regularization& BO+\GtoG& GAM   \\
    \midrule
    \multirow{2}{*}{\GCN} &$77.0$& $76.2$&$80.3$&$82.0$&$84.8$ \\
    &$77.4$& $94.9$&$94.1$&$97.4$&-\\
    \cmidrule(r){2-6}
    \multirow{2}{*}{Laplacian} &$71.7$& $76.2$&$78.3$&$76.2$&-\\
     &$71.0$& $81.9$&$83.2$&$83.5$&-\\
    \bottomrule
  \end{tabular}
\end{table}

\subsection{Results on Cora}
We use bilevel optimization \eqref{eq:bilevel_problem_learn_A} to solve an edge refinement task on Cora, trying both the \GCN and the Laplacian models.
Here the downstream task is a multi-label classification problem and $\ell$ is the CCE function.
We depict in \cref{fig:Cora_gradient_scarcity} the received hypergradient on edges at outer iteration $9$ as a function of their distance to labelled nodes.
For the Laplacian regularization case, that is the edge cumulative distance to $V_{tr}$ and $V_{out}$ as defined in \cref{sec:exp_synthetic1}.
Whereas to compute the edge distance in the \GCN case, we compute for each of its endpoint nodes its distance to $V_{tr} \cup V_{out}$, then we take the minimum.
In accordance with our analysis, the figure displays a null hypergradient for distances greater than $2$ in the \GCN case, while the Laplacian regularization scenario exhibits a hypergradient that diminishes exponentially with distance.

Regarding the generalization capacity, \cref{table:Cora_accuracy} shows that the learned graph is inferior to $\V{A}_{obs}$ in the \GCN case for the test error.
Given that the learned graph achieves $100\%, 94.9\%$ accuracies on $V_{tr}, V_{out}$, respectively, one concludes that hypergradient scarcity provokes overfitting.
This is indeed expected due to the extreme scarcity in the \GCN scenario, as edges of distance greater than $2$ keep their random initialization after the training process.
This is, however, not the case in the Laplacian regularization scenario as most edges are of distance less than $11$, thereby they do not exhibit damped hypergradients and the impact on generalization is not observed.

Next, we test the efficiency  of the proposed solutions to  mitigate hypergradient scarcity.
We do not try learning a power of $\V{A}_{obs}$ as the memory requirement goes beyond the limits we have access to.
Results in \cref{fig:Cora_gradient_scarcity} prove the efficiency of graph regularization as all edges receive non-zero hypergradients with a comparable magnitude to those on edges of small distance.
Note that hypergradients are received on the \GtoG weights when it is deployed, not on edges, so we do not depict them in this figure.
Regarding the impact on generalization, \cref{table:Cora_accuracy} shows that both fixes yield significant improvements in test accuracy over $\V{A}_{obs}$ with the \GCN classifier.
In the Laplacian regularization case, graph regularization produces a higher test accuracy, unlike the \GtoG model which generalizes equally good as when learning directly edge weights.
We also notice that \GCN model leads to superior results in all scenarios, with a notable gap for the \GtoG model and graph regularization, and when directly using $\V{A}_{obs}$.
This is expected, as the Laplacian regularization promotes similarity between connected nodes but, unlike \GCNs, is not a supervised-based method.
We finally point out that the bilevel optimization framework with either fix does not achieve state-of-the-art results produced by GAM with the same \GCN classifier.

Other experiments suggest that although \GtoG models alleviate  hypergradient scarcity, regardless of the number of neurons in its  layers, the generalization performance is sensitive to this number and if set large, clear overfitting is observed.

\section{Conclusion}
We studied hypergradient scarcity 
when deploying bilevel optimization in edge refinement tasks  
under the SSL settings.
This phenomenon consists in edges far from labelled nodes receiving scarce hypergradients when optimizing the graph and the classifier to improve classification performance.
We proved that this problem occurs for \GCN.
Replacing \GCNs by the Laplacian regularization model, does not resolve the issue; however, the phenomenon is less severe: we bounded the magnitude of hypergradients and proved they are exponentially damped with distance to labelled nodes.
To alleviate hypergradient scarcity, we proposed to resort to latent graph learning, graph regularization, and refining edges in a power of the observed adjacency matrix.
Our experiments validated our findings, and privileged the first two solutions over the latter.
Moreover, we show that alleviating the hypergradient scarcity does not necessarily alleviate overfitting.  

\bibliographystyle{IEEEtran}

\vfill

\end{document}